\let\MYcaption\@makecaption
\let\@makecaption\MYcaption
\newtheorem{theorem}{Theorem}
\newtheorem{lemma}{Lemma}
\theoremstyle{definition}
\newtheorem{assumption}{Assumption}
\newtheorem{definition}{Definition}
\newtheorem{remark}{Remark}
\DeclareMathOperator*{\kron}{\otimes}
\newcommand{\eqb}[1]{\begin{equation}\label{#1}}
\newcommand{\eqe}{\end{equation}}
\newcommand{\mbb}[1]{\mathbb{#1}}
\newcommand{\mc}[1]{\mathcal{#1}}
\newcommand{\ib}{\begin{itemize}}
\newcommand{\ie}{\end{itemize}}
\def \symm {+}
\newsavebox{\ieeealgbox}
\title{\LARGE \bf
Distributed Adaptive and Resilient Control of Multi-Robot Systems \\with Limited Field of View Interactions}
\author{Pratik Mukherjee$^{1*}$, Matteo Santilli$^{2}$, Andrea Gasparri$^{2}$, and Ryan K. Williams$^{1}$%
\thanks{$^{1}$P. Mukherjee and R.K. Williams are with Electrical and Computer Engineering Department, Virginia Polytechnic Institute and State University, Blacksburg, VA USA, {\tt\small \{mukhe027, rywilli1\}@vt.edu} }%
\thanks{$^{2}$M. Santilli and A. Gasparri are with the Engineering Department, Roma Tre University, Roma, 00146, Italy, {\tt\small matteo.santilli@uniroma3.it, gasparri@dia.uniroma3.it}}%
}
\def\trifov#1{\mathcal{T}_{#1}}
\def\dims{3}
\def\dimp{2}
\begin{document}

\maketitle
\thispagestyle{empty}
\pagestyle{empty}

\begin{abstract}
 In this paper, we consider two coupled problems for distributed multi-robot systems (MRSs) coordinating with limited field of view (FOV) sensors:  \emph{adaptive tuning of interaction gains and rejection of sensor attacks}.  First, a typical shortcoming of distributed control frameworks (e.g., potential fields) is that the overall system behavior is highly sensitive to the gain assigned to relative interactions.  Second, MRSs with limited FOV sensors can be more susceptible to sensor attacks aimed at their FOVs, and therefore must be resilient to such attacks.  Based on these shortcomings, we propose a comprehensive solution that combines efforts in adaptive gain tuning and attack resilience to the problem of topology  control for MRSs with limited FOVs.  Specifically, we first derive an adaptive gain tuning scheme based on satisfying \emph{nominal pairwise interactions}, which yields a dynamic balancing of interaction strengths in a robot's neighborhood.  We then model additive sensor and actuator attacks (or faults) and derive $H_{\infty}$ control protocols by employing a static output-feedback technique, guaranteeing bounded $L_2$ gains of the error induced by the attack (fault) signals.  Finally, simulation results using ROS Gazebo are provided to support our theoretical findings.

\end{abstract}

\section{Introduction}

Recent efforts in distributed multi-robot coordination have
exploded, both in number and in breadth of capability. The
significant interest in distributed systems is not surprising as
the applications are numerous, including for example sensor
networks, collaborative robotics, transportation, etc. One trend
of multi-robot control has been potential-based design. Indeed, the potential-based control
methodology has been widely investigated by the robotics/control community, e.g., \cite{santilli:2022,Gasparri:CDC:2017:1,mukherjee2019experimental,santilli2019distributed,ji2007distributed,zavlanos2007potential,dimarogonas2008connectedness,Williams:2015:ICRA,Williams:2013bh,Heintzman2018-xb,Heintzman2020-cz,Williams2017-qq,Williams2015-hw}. The advantages of potential-based design
are that controllers are easily distributed, physically-motivated,
and come with provable convergence guarantees.
Although the potential-based control framework is very powerful, it exhibits several shortcomings ranging from local minima to high sensitivity to gains. While local minima are inherent to the design of potentials encoding network objectives,
high sensitivity to gains can be mitigated with a proper choice
of pairwise interaction strength for a given multi-robot system (MRS) and its coordination objective. 
Works such as \cite{bechlioulis2014low}, \cite{bechlioulis2014robust} (which are based on the prescribed
performance framework) and \cite{karayiannidis2012multi} control the transient response of an error signal
in formation control and consensus to satisfy a prescribed system performance, yielding a gain tuning behavior. \cite{bechlioulis2014robust} achieves robust model-free prescribed control for leader-follower formations, which is decoupled from graph topology,
control gain selection, and model uncertainties. Alternatively,
works such as \cite{korda2014stochastic}, \cite{guo2013controlling} take a model-driven approach in controlling constraint violation, \cite{korda2014stochastic} applying model-predictive
control and \cite{guo2013controlling} through navigation functions (a form of potential control), both in a centralized context. On the contrary, we develop a \emph{distributed} adaptive controller, which is resilient to external attacks or disturbances, for topology control applications of MRS where the robots are equipped with \emph{limited field of view} (FOV) sensors.

Control with limited FOV \cite{Asadi:2016,Zhang:2013} is an interesting yet less explored topic of research, e.g., for aerial robots equipped with cameras capable of sensing other robots in a visually restricted area. 
In our recent work \cite{santilli2019distributed}, we exhibit a distributed potential-based coordination framework for an MRS equipped with limited FOV sensors to maintain an asymmetric, connected topology (i.e., a weakly connected directed graph). In such a context, threat from external attacks or disturbances can destabilize an already sensitive system. Limited FOV control is not just prone to the sensitivity of potential-based controllers, but also the threat of external attack, especially as stability is not intrinsic for asymmetric interactions \cite{Gasparri:CDC:2017:1,mukherjee2020optimal}. Therefore, it is important that we develop a control scheme that is resilient to such external attacks and disturbances. Works like \cite{chen2020adaptive,chen2019resilient} address the problem of solving for resilient controllers for MRSs that are exposed to external attacks, faults, or disturbances. 
The authors use the \emph{static output feedback} technique to show that the error induced from faults has bounded $L_2$ gain in terms of the $L_2$ norm of fault signals. 
In this paper, we integrate the static output feedback technique using $H_{\infty}$ control protocols to fit our potential-based control framework in the context of limited FOV interactions (as in \cite{santilli2019distributed}).

In the case of the potential-based control framework, when considering more complex control objectives like enforcing desirable topological conditions (as opposed to consensus or formation control), especially with limited FOVs, the state of the art is still a non-linear potential field as their theoretical analysis remains tractable.  However, the problem remains of actually deploying potential-based methods on real robots as they are highly sensitive to appropriate selection of control gains and external attacks or disturbances. Thus, in this paper we aim to provide a comprehensive solution to applying potential-based design for real-world MRSs by combining efforts in adaptive gain tuning and attack or disturbance resilience to the problem of topology control with limited FOVs (as depicted in Figure \ref{fig:gaz_mot}). Our vision is that our methods can reduce the gap between theoretically tractable multi-robot control methods and real-world deployments.
In this direction, our contribution is threefold;
i) We demonstrate that our distributed controller yields
an approximation of a desired pairwise interaction strength between all robots,
while preserving the equilibria of an underlying objective for robots coordinating in asymmetric interaction scenarios with limited FOVs. 
 ii) We couple the potential-based controller with $H_{\infty}$ control protocols to reject external attacks or disturbances on an MRS. 
 iii) We corroborate the theoretical results with extensive ROS Gazebo simulations.
 
 The remainder of the paper is organized as follows. In Section~\ref{sec:preliminaries}, preliminary notions are given. In Section~\ref{sec:control}, a brief review of limited FOV topology control from \cite{santilli2019distributed} is done. In Section~\ref{sec:dist_adp_sec}, the proposed coordination framework with \emph{adaptive} control is derived. In Section~\ref{sec:res_sec}, the derivation of \emph{resilient} adaptive control is presented. In Section~\ref{sec:simulations}, the results of numerical and experimental validations are described. Finally, in Section~\ref{sec:conclusion}, we conclude our work.

\section{Preliminaries}\label{sec:preliminaries}

Let us consider an MRS composed of $n$~robots and assume that each robot~$i$ has a first-order dynamics $\dot{s}_i(t) = u_i(t)$
with $s_i(t) = [p_i(t)^T, \, \theta_i(t)]^T \in \mathbb{R}^{3}$ the state of robot~$i$ composed of the position $p_i(t) = [x_i(t), y_i(t)]^T \in \mathbb{R}^{\dimp}$ and the orientation $\theta_i \in (-\pi,\pi]$, while $u_i(t) \in \mathbb{R}^{\dims}$ denotes the control input. Stacking robot states and inputs yields the overall system $\dot{\mathbf{s}}(t) = \mathbf{u}(t)$
with $\mathbf{s}(t) = [s_1(t)^T,\, \ldots, \, s_n(t)^T]^T \in \mathbb{R}^{3 n} $ and $\mathbf{u}(t) = [u_1(t)^T,\, \ldots, \, u_n(t)^T]^T \in \mathbb{R}^{3 n}$ the stacked vector of states and control inputs, respectively. In the sequel, time-dependence will be omitted for the sake of brevity.

Let us assume that each robot~$i$ possesses a limited field of view that is encoded by a triangle geometry $\trifov{i}$ rigidly fixed to the robot as depicted in Figure \ref{fig:gaz_mot} \cite{santilli2019distributed}. This kind of sensing yields asymmetric robot interactions that we will describe through a directed graph \mbox{$\mathcal{G} = \{ \mathcal{V}, \, \mathcal{E} \}$} with node set $ \mathcal{V} = \{ 1,\ldots,n \}$ and edge set $\mathcal{E} \subseteq \mathcal{V} \times \mathcal{V}$. In particular, we will say that an edge $e_{ij} \in \mathcal{E}$ connects robot~$i$ and robot~$j$ if $p_j \in \trifov{i}$. In addition, when referencing single edges we will use the convention $e_k$ meaning that we are referencing the $k$-th directed edge out of $|\mathcal{E}|$ total edges\footnote{In order to reference the $k$-th edge, the edge set $\mathcal{E}$ needs to be sorted. A simple sort can be obtained enumerating the edges $(1,j)$ of the first robot as \mbox{$e_1,e_2,\ldots,$} then the edges $(2,h)$ of the second robot and so on.}. Moreover, we will denote by \mbox{$\mathcal{N}_{i}^{+} = \{j \in \mathcal{V} :(i,j) \in \mathcal{E}\}$} the set of \emph{out-neighbors} of robot~$i$ and \mbox{$\mathcal{N}_{i}^{-} = \{j \in \mathcal{V} :(j,i) \in \mathcal{E}\}$} the set of \emph{in-neighbors}. Note that since the graph is directed, $(i,j) \in \mathcal{E}$ does not imply $(j,i) \in \mathcal{E}$. 
A useful representation for a directed graph $\mathcal{G}$ is the \textit{incidence matrix} $ \mathcal{B}(\mathcal{G})\in \mathbb{R}^{n\times |\mathcal{E}|}$, that is a matrix with rows indexed by robots and columns indexed by edges, such that $\mathcal{B}_{ij} = 1$ if the edge $e_j$ leaves vertex $v_i$, $-1$ if it enters vertex $v_i$, and $0$ otherwise. The \emph{outgoing incidence matrix} $\mathcal{B}_+$ contains only the outgoing parts of the incidence matrix $\mathcal{B}$, with incoming parts set to zero. We will also make use of the \emph{directed edge Laplacian} $\mathcal{L}_{\mathcal{E}}^d \in \mbb{R}^{|\mc{E}| \times |\mc{E}|}$ given by $\mc{L}_{\mc{E}}^d = \mc{B}^T\mc{B}_+$ (see \cite{Zelazo:2007,Zeng:2016}).  

\begin{figure}[t!]
\centering
    \includegraphics[width=0.4\textwidth]{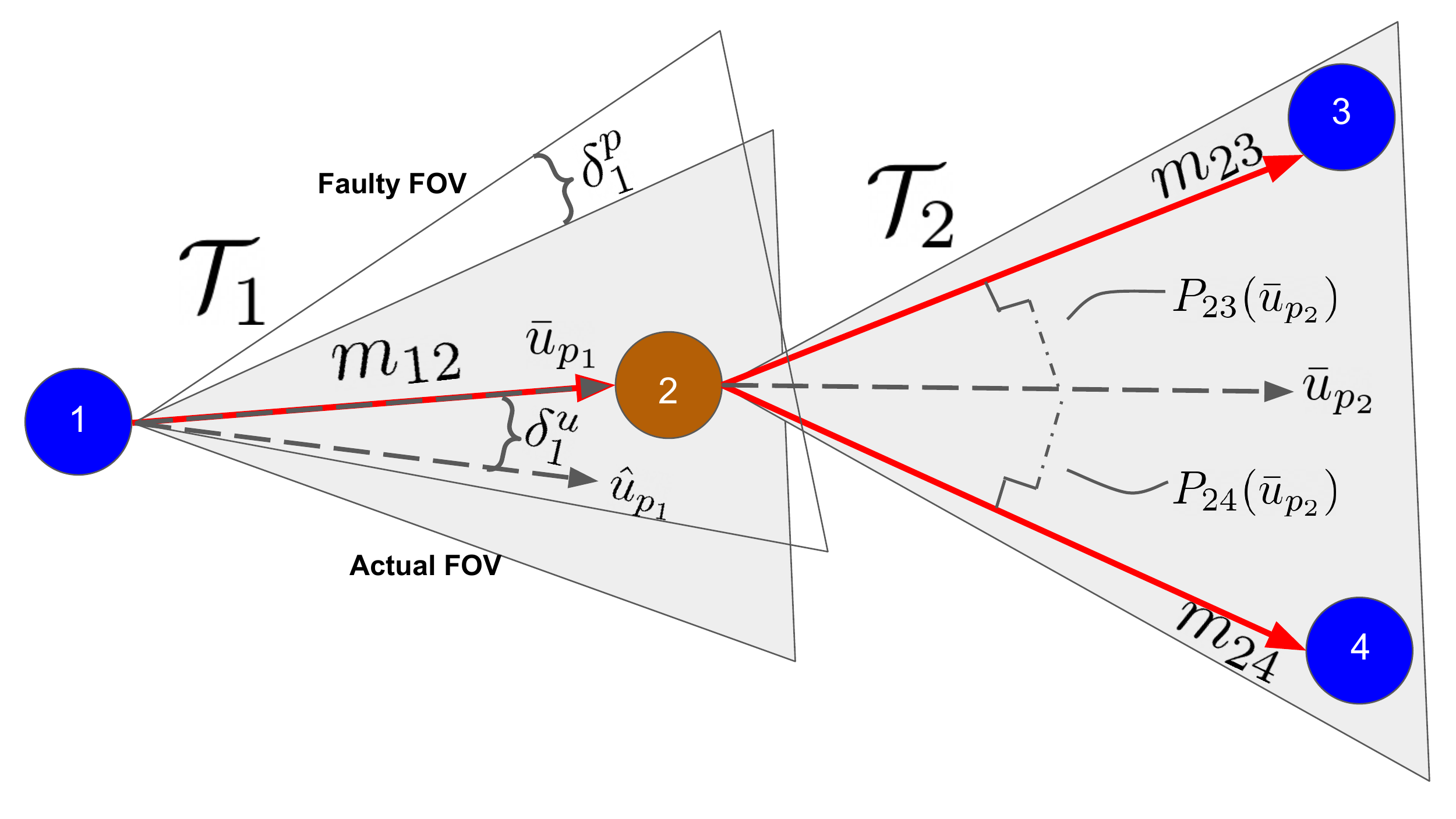}
    \caption{An illustration of the geometry of adaptive controller with faulty sensor and actuator. The interaction, limited FOV $\mathcal{T}_1$, between robots $1$ and $2$ is marred by sensor fault $\delta^p_1$ and actuator fault $\delta^u_1$ which leads to the deviation of the interaction from the desired interaction $m_{12}$. The \emph{resilient} interactions, $\mathcal{T}_2$, between robots $2\rightarrow3$ and $2\rightarrow4$ depict the desired interactions $m_{23}$ and $m_{24}$ with the projections $P_{23}$ and $P_{24}$ of the distributed controller $\Bar{u}_{p_2}$ onto (two of) the model interactions as depicted by dashed lines. The overall goal is to select  gains for the distributed controller that minimize the model fitting error. }
    \label{fig:gaz_mot}
\end{figure}

\section{Directed Coordination Framework}\label{sec:control}

In this section, we review the theory of potential-based control framework for MRS with limited FOV from our prior work \cite{santilli2019distributed}. As a case study let us consider the maintenance of the interactions (topology) among robots in a network, i.e., we want to preserve the initial graph~$\mathcal{G}$.  In other words, we want to derive a distributed potential-based control law such that each robot~$i$ maintains its neighbor~$j$ inside the triangle~$\trifov{i}$ that defines the perimeter of the FOV of each robot as detailed in \cite{santilli2019distributed}. In this direction, we can define the potential field term $\Phi_{ij}(s_i,s_j)$ which encodes the sum of energies of the perpendicular distances from a point, the neighboring robot~$j$, to the line, the sides of the triangle~$\trifov{i}$.

Next, we model a quality of interaction map  by using a two-dimensional Gaussian function $\Psi_{ij}(s_i,s_j)$ with mean  and variance that encodes the desired location of neighboring robot $j$ in robot $i$'s FOV.
For the FOV maintenance controller, we can move along the anti-gradient of our potential fields $\Phi_{ij}(s_i,s_j)$ and $\Psi_{ij}(s_i,s_j)$, a typical potential-based control procedure, to orient the FOV of the robots in order to minimize the distance between the neighbors $j \in \mathcal{N}^+_i$ and the desired position. 
Therefore, we can introduce the control law that each robot~$i$ needs to run in order to keep its neighbors in the limited sensing zone $\trifov{i}$ near the desired point as the following:
\begin{equation}\label{eq:u_i}
\begin{aligned}
\dot{s}_i &= - \underbrace{\sum \limits_{j \in \mathcal{N}_i^+} \nabla_{s_i} \Big ( \Phi_{ij}(s_i,s_j)+\Psi_{ij}(s_i,s_j) \Big )}_{u_i} \\
\end{aligned}
\end{equation}

\subsection{Stability Analysis of Asymmetric Interaction Graphs}

In this section we provide the main result of our work from \cite{santilli2019distributed}, that is the stability analysis of the topology control framework reviewed above. From \cite{santilli2019distributed} we know that the overall Lyapunov function  for directed graphs $\Bar{V}: \mathbb{R}^{3 n} \rightarrow \mathbb{R}_+$ is defined as
\begin{equation}\label{eq:V}
\begin{aligned}
    \Bar{V} (\mathbf{s}(t)) 
    &= \sum \limits_{i=1}^n \sum \limits_{j \in \mathcal{N}_i^+} \underbrace{\Big ( \Phi_{ij} \left (s_i,s_j \right ) + \Psi_{ij} \left (s_i, s_j \right) \Big )}_{\Bar{V}_{ij} \left (s_i,s_j \right )}
\end{aligned}
\end{equation}
Furthermore, let us introduce the matrix $\overline{\mathcal{L}}$ defined as
\begin{equation}\label{eq:overlineL}
    \overline{\mathcal{L}} = \begin{bmatrix}
    \left ( \mathcal{B}^T \mathcal{B}_+ \right )\otimes I_{\dimp} & \mathbf{0}_{ \dimp |\mathcal{E}| \times |\mathcal{E}|} \\
    \mathbf{0}_{|\mathcal{E}| \times \dimp |\mathcal{E}|} & \left( \mathcal{B}_+^T \mathcal{B}_+ \right )
\end{bmatrix}
\end{equation}
where 
$I_{\dimp}$ is the $\dimp \times \dimp$ identity matrix and $\mathbf{0}_{r \times \Bar{r}}$ is a $r \times \Bar{r}$ zeros matrix. We are now ready to recall our main result.

\begin{theorem}(Theorem 1 in \cite{santilli2019distributed})\label{th:1}
Consider the multi-robot system $\dot{\mathbf{s}}(t) = \mathbf{u}(t)$ running control laws \eqref{eq:u_i}. Then, if the symmetric part of matrix $\overline{\mathcal{L}}$ in \eqref{eq:overlineL}, $ \overline{\mathcal{L}}^{\symm} = \frac{1}{2} \left( \overline{\mathcal{L}} + \overline{\mathcal{L}}^T \right )$, is positive semi-definite, the system is stable in the sense that if the energy $\Bar{V} (\mathbf{s}(t))$ is finite at time $t = t_0$ then it remains finite for all $t > t_0$.
\end{theorem}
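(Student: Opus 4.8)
The plan is a Lyapunov argument built on $\Bar V$ in \eqref{eq:V}. I would differentiate $\Bar V$ along the closed loop \eqref{eq:u_i}, rewrite $\dot{\Bar V}$ as $-\mathbf{x}^\top\overline{\mathcal{L}}\,\mathbf{x}$ for a vector $\mathbf{x}$ assembled from the per-edge gradients of the potentials, with $\overline{\mathcal{L}}$ as in \eqref{eq:overlineL}, and conclude that $\overline{\mathcal{L}}^{\symm}\succeq 0$ (equivalently $\mathbf{x}^\top\overline{\mathcal{L}}\,\mathbf{x}\ge 0$ for all $\mathbf{x}$) forces $\dot{\Bar V}\le 0$. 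Since $\Bar V$ is then nonincreasing along the (continuous) trajectory, $\Bar V(\mathbf{s}(t))\le\Bar V(\mathbf{s}(t_0))$, so finiteness at $t_0$ implies finiteness for all $t>t_0$. Throughout I treat the edge set $\mathcal{E}$ as fixed, which is consistent with the framework of \cite{santilli2019distributed}: $\Phi_{ij}$ diverges as $p_j$ approaches the boundary of $\trifov{i}$, so a bounded-energy trajectory never drops an edge.

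The first technical step is to extract two structural properties of each edge potential $\Bar V_{ij}=\Phi_{ij}+\Psi_{ij}$ from its construction. Since $\trifov{i}$ is rigidly attached to robot~$i$ and the target Gaussian in $\Psi_{ij}$ is specified in robot~$i$'s body frame, $\Bar V_{ij}$ depends on $(p_i,\theta_i,p_j)$ only through the body-frame displacement $q_{ij}:=R(\theta_i)^\top(p_j-p_i)$ and not at all on $\theta_j$; hence $\nabla_{p_i}\Bar V_{ij}=-\nabla_{p_j}\Bar V_{ij}$ and $\nabla_{\theta_j}\Bar V_{ij}=0$. For the $k$-th directed edge $e_k=(i,j)$ I set $w_k:=\nabla_{p_j}\Bar V_{ij}\in\mathbb{R}^{\dimp}$ and $\tau_k:=\nabla_{\theta_i}\Bar V_{ij}\in\mathbb{R}$, stack these into $\mathbf{w}\in\mathbb{R}^{\dimp|\mathcal{E}|}$ and $\boldsymbol\tau\in\mathbb{R}^{|\mathcal{E}|}$, and take $\mathbf{x}=[\mathbf{w}^\top,\,\boldsymbol\tau^\top]^\top$, which matches the block partition of $\overline{\mathcal{L}}$.

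Next I would assemble the derivative. By the chain rule $\dot{\Bar V}=\sum_{i=1}^n\nabla_{s_i}\Bar V^\top\dot s_i$, and grouping the edges incident to robot~$i$ gives $\nabla_{s_i}\Bar V=g_i+h_i$, where the outgoing part $g_i=\sum_{j\in\mathcal{N}_i^+}\nabla_{s_i}\Bar V_{ij}$ satisfies $\dot s_i=-g_i$ by \eqref{eq:u_i}, and the incoming part is $h_i=\sum_{k\in\mathcal{N}_i^-}\nabla_{s_i}\Bar V_{ki}$; thus $\dot{\Bar V}=-\sum_{i=1}^n(g_i+h_i)^\top g_i$. Splitting each block into its position and heading parts and invoking the two identities: robot~$i$'s outgoing position block is $-\sum_{e_k\text{ out of }i}w_k$ and its incoming one is $+\sum_{e_k\text{ into }i}w_k$, so (using that the incoming incidence matrix equals $\mathcal{B}_+-\mathcal{B}$) the stacks are $\mathbf{g}^p=-(\mathcal{B}_+\otimes I_{\dimp})\mathbf{w}$ and $\mathbf{g}^p+\mathbf{h}^p=-(\mathcal{B}\otimes I_{\dimp})\mathbf{w}$; the heading parts give $\mathbf{h}^\theta=0$ from $\nabla_{\theta_j}\Bar V_{ij}=0$ and $\mathbf{g}^\theta+\mathbf{h}^\theta=\mathbf{g}^\theta=\mathcal{B}_+\boldsymbol\tau$. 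Substituting yields
\[
\dot{\Bar V}=-\mathbf{w}^\top\!\big((\mathcal{B}^\top\mathcal{B}_+)\otimes I_{\dimp}\big)\mathbf{w}-\boldsymbol\tau^\top(\mathcal{B}_+^\top\mathcal{B}_+)\boldsymbol\tau=-\mathbf{x}^\top\overline{\mathcal{L}}\,\mathbf{x}=-\mathbf{x}^\top\overline{\mathcal{L}}^{\symm}\mathbf{x},
\]
the last step because a quadratic form depends only on the symmetric part of its matrix; hence $\overline{\mathcal{L}}^{\symm}\succeq 0$ gives $\dot{\Bar V}\le 0$ and the claim follows.

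I expect the crux to be the middle step: establishing the two gradient identities and, in particular, the cancellation in the incoming position channel that converts the one-sided incidence matrix $\mathcal{B}_+$ into the full signed incidence matrix $\mathcal{B}$. This is exactly where the asymmetry of limited-FOV sensing enters --- robot~$i$ feels the ``reaction'' of being observed by its in-neighbors even though it never acts toward them, and that reaction is what produces the non-symmetric cross term $\mathcal{B}^\top\mathcal{B}_+$ rather than $\mathcal{B}_+^\top\mathcal{B}_+$ (the term one would obtain for symmetric interactions). Checking that the heading channel produces no analogous reaction (the FOV triangle is blind to neighbors' headings, so $\mathbf{h}^\theta=0$) and carefully bookkeeping the Kronecker/incidence structure is the bulk of the work; once that is settled, positive semi-definiteness of $\overline{\mathcal{L}}^{\symm}$ closes the proof at once.
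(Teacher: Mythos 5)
Your proposal is correct and follows essentially the same route as the paper: both compute $\dot{\Bar V}$ along \eqref{eq:u_i}, assemble it into the quadratic form $-\xi^\top\overline{\mathcal{L}}\,\xi$ in the stacked per-edge gradients (your $\mathbf{x}$ differs from the paper's $\xi$ in \eqref{eq:xi} only by a sign flip in the position block, which is immaterial for a quadratic form), and symmetrize to get $-\xi^\top\overline{\mathcal{L}}^{\symm}\xi\le 0$, from which monotonicity of $\Bar V$ gives the claim. Your explicit bookkeeping of the outgoing versus incoming gradient contributions correctly explains why the position block is the asymmetric $\mathcal{B}^\top\mathcal{B}_+$ while the heading block is $\mathcal{B}_+^\top\mathcal{B}_+$, matching \eqref{eq:overlineL} and \eqref{eq:dotV:2}.
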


From \cite{santilli2019distributed}, we show that the Lyapunov function $\Bar{V}(s)$ is continuously differentiable for which we can define the level set~$\Delta_c = \{s : \Bar{V}(s) \leq c \}$ for any $c>0$ as a compact and invariant set with respect to the relative position of the robots. Arguments about the compactness of the level sets~$\Delta_c$ with respect to the relative distances can be found in \cite{Dimarogonas:2008,Tanner:TAC:2007}. LaSalle's principle now guarantees that the system will converge to the largest invariant subset of $\{ s :  \dot{\Bar{V}}(s) = 0 \}$ by analyzing the nullspace of the matrix~$\overline{\mathcal{L}}^{\symm}$, $\left \{\xi \in \mathbb{R}^{3|\mathcal{E}|} : \overline{\mathcal{L}}^{\symm} \xi = 0 \right \}$ for the Lyapunov derivative defined as
\begin{equation}\label{eq:dotV:2}
\begin{aligned}
    \dot{\Bar{V}}(\mathbf{s}) 
     =& - \frac{1}{2} \xi^T \! \left [ \left( \overline{\mathcal{L}}\!  +\!  \overline{\mathcal{L}}^T \right ) \! + \! \left( \overline{\mathcal{L}}\!  -\!  \overline{\mathcal{L}}^T \right )  \right ] \xi =- \xi^T \overline{\mathcal{L}}^{\symm} \xi \leq 0 \\
\end{aligned}
\end{equation}
where $\overline{\mathcal{L}}$ is already defined in \eqref{eq:overlineL} and  
using the stacked vector of potential field gradients $\xi \in \mathbb{R}^{\dims|\mathcal{E}|} = \left [\xi_{xy}^T, \, \xi_{\theta}^T \right]^T$ where $\xi_{xy} \in \mathbb{R}^{2|\mathcal{E}|}$ and $\xi_{\theta} \in \mathbb{R}^{|\mathcal{E}|}$ are defined as
\begin{equation}\label{eq:xi}
\begin{aligned}
&\xi_{xy}=\!  \resizebox{0.92\hsize}{!}{%
    $\left [ \nabla_{x_{e_{1}(1)}} \Bar{V}_{e_1}^T, \nabla_{y_{e_{1}(1)}} \Bar{V}_{e_1}^T, \ldots, \nabla_{x_{e_{|\mathcal{E}|}(1)}} \Bar{V}_{e_{|\mathcal{E}|}}^T, \nabla_{y_{e_{|\mathcal{E}|}(1)}} \Bar{V}_{e_{|\mathcal{E}|}}^T \right ]^T$
    } \\
&\xi_{\theta} = \resizebox{.5\hsize}{!}{
$\left [ \nabla_{\theta_{e_{1}(1)}} \Bar{V}_{e_1}^T, \ldots, \nabla_{\theta_{e_{|\mathcal{E}|}(1)}} \Bar{V}_{e_{|\mathcal{E}|}}^T \right ]^T$}
\end{aligned}
\end{equation}
where $e_k(1)$ denotes the starting vertex of the $k$-th edge $(i,j)$, and thus $\nabla_{w_{e_{k}(1)}} \Bar{V}_{e_k} \in \mathbb{R}$ denotes the gradient with respect to the state variable $s_i \in \{x_i, y_i, \theta_i \}$ of potential function $\Bar{V}_{ij}$.
The Lyapunov time derivative in \eqref{eq:dotV:2}  is in a typical quadratic form and hence its negative-semidefiniteness depends on the positive-semidefiniteness of symmetric block diagonal matrix~$\overline{\mathcal{L}}^{\symm}$.

\section{Distributed Adaptive FOV Control}\label{sec:dist_adp_sec}
In the previous section, we provided a stable control law \eqref{eq:u_i} originally developed in \cite{santilli2019distributed} to conduct a stable topology control for an MRS with limited FOV. 
In this section, we modify the control law to adaptively tune the gains for an MRS to improve robustness in realistic deployments. 
\subsection{Lyapunov-Based Adaptive Control Design}\label{sub_sec:Lyp_adap}
For the sake of analysis, let us introduce a bijective indexing function $g(\cdot): \mathcal{V}\times \mathcal{V}\rightarrow \{1,...,|\mathcal{E}|\}$ to sort directed edges, that is $g(\mathcal{E})=[e_1,...,e_{|\mathcal{E}|}]$. Note that, since the graph is directed we have $e_h=g(e_{ij})\neq g(e_{ji}) $. Clearly, it is possible to define the inverse function such that if $e_h = g(e_{ij})$ then $e_{ij}=g^{-1}(e_h)$. Let us now consider an additional
stacked vector of gains $\mathbf{k}$ defined as
\begin{equation}\label{eq:k_vec}
\begin{aligned}
    \mathbf{k}=[k_{e_1},...,k_{e_{|\mathcal{E}|}}]
\end{aligned}    
\end{equation}
Then we have the following generalization of the distributed control law in \eqref{eq:u_i} for a given robot $i$:
\begin{equation}\label{eq:gen_con_law}
\begin{aligned}
    \dot{s}_i = \underbrace{- \sum_{j \in \mathcal{N}_i^+} k_{ij} \nabla_{s_i} \Bar{V}_{ij}(s_i,s_j)}_{\Bar{u}_i (\mathbf{s},\mathbf{k})}\\
    \dot{k}_{ij}= u_{ij}(\mathbf{s},\mathbf{k}), \quad \forall \; e_{ij} \;|\; j \in \mathcal{N}_i^+
\end{aligned}    
\end{equation}
where $\Bar{u}_i=[\Bar{u}_{p_i}^T,\Bar{u}_{\theta_i} ]^T$ and $u_{ij}(\mathbf{s},\mathbf{k})$, the gain control laws, will be defined later. Our goal is to design for each pair of robots $i$ and $j$ the control term $u_{ij}(\mathbf{s},\mathbf{k})$ to adaptively tune pairwise gains $k_{ij}$ to approximate the nominal pairwise interaction model given by $\Bar{V}_{ij}(s_i,s_j)$, independent of the network size. To design such an adaptive potential-based control law the following Lyapunov function can be considered 
\begin{equation}\label{eq:adp_V}
\begin{aligned}
    V(\mathbf{s},\mathbf{k})=\underbrace{\sum^n_{i=1}\sum_{j\in \mathcal{N}_i^+}k_{ij}\Bar{V}_{ij}(s_i,s_j)}_{\hat{V}(\mathbf{s},\mathbf{k})} + \underbrace{\sum^n_{i=1}\sum_{j\in \mathcal{N}_i^+} F_{ij}(\mathbf{p},\mathbf{k})}_{F(\mathbf{p},\mathbf{k})}
\end{aligned}    
\end{equation}
with $F_{ij}(\mathbf{p},\mathbf{k})$ the \emph{pairwise cost objective function} encoding the deviation of each pairwise interaction from its nominal model defined as
\begin{equation}\label{eq:adp_F}
\begin{aligned}
   F_{ij}(\mathbf{p},\mathbf{k})= \frac{1}{2} \| P_{ij}(\Bar{u}_{p_i}(\mathbf{p},\mathbf{k}))- m_{ij}(\mathbf{p},\mathbf{k})\|^2
\end{aligned}    
\end{equation}
with our \emph{desired pairwise nominal model} $m_{ij}$ defined as
\begin{equation}\label{eq:mij}
\begin{aligned}
   m_{ij}(\mathbf{p},\mathbf{k})= - \nabla_{p_i} \Bar{V}_{ij}(p_{ij})
\end{aligned}    
\end{equation}
that is, the anti-gradient of the potential function $\Bar{V}_{ij}$ with respect to robot $i$, where $P_{ij}(\Bar{u}_{p_i}(\mathbf{p},\mathbf{k}))$ is the projection of the control input of robot $i$ over the line of sight of the robots $i$ and $j$ defined as 
\begin{equation}\label{eq:Pij}
\begin{aligned}
   P_{ij}(\Bar{u}_{p_i}(\mathbf{p},\mathbf{k}))=\left ( \frac{p_{ij}^T \Bar{u}_{p_i}(\mathbf{p},\mathbf{k})}{\|p_{ij}\|^2} \right ) p_{ij}
\end{aligned}    
\end{equation}
Note, we make $F(\mathbf{p},\mathbf{k})$ explicitly a function of the state variables $\mathbf{p}=[\mathbf{x}^T,\mathbf{y}^T]^T$ because intuitively $\theta$ is embedded in the computation of the projection operator in \eqref{eq:Pij} since the per-robot terms $x_{ij}$ and $y_{ij}$ are implicitly a function of $\theta$ in the polar coordinate form $(\varphi, \theta )$ where $x_{ij}=\varphi \cos\theta$ and $y_{ij}=\varphi \sin\theta$ where $\varphi$ is the radian distance between robot $i$ and $j$. 

With this intuition, we can complete the control law given in \eqref{eq:gen_con_law} by introducing the following distributed control law for each edge-wise gain $k_{ij}$ with $(i,j)\in \mathcal{E}$
\begin{equation}\label{eq:uij}
\begin{aligned}
 \dot{k}_{ij}=u_{ij}(\mathbf{s},\mathbf{k})= - \nabla_{k_{ij}} F(\mathbf{p},\mathbf{k}) + w_{ij}
\end{aligned}    
\end{equation}
where the term $\nabla_{k_{ij}}F(\mathbf{p},\mathbf{k})$ is computed per edge in the directed graph as
\begin{equation}\label{eq:del_F}
\begin{aligned}
\nabla_{k_{ij}}F(\mathbf{p},\mathbf{k}) = \sum_{(i,h) \in\mathcal{E}} \nabla_{k_{ij}}F_{ih}(\mathbf{p},\mathbf{k}) 
\end{aligned}    
\end{equation}
Therefore, the overall $\nabla_{\mathbf{k}}F(\mathbf{p},\mathbf{k})$ is given as the stacked vector of potential fields
$\nabla_{\mathbf{k}}F(\mathbf{p},\mathbf{k})= [ \nabla_{k_{e_1}}F(\mathbf{p},\mathbf{k})^T,...,\nabla_{k_{e_{|\mathcal{E}|}}}F(\mathbf{p},\mathbf{k})^T]^T $. An alternative form is shown below as
\begin{equation}\label{eq:del_F2}
\begin{aligned}
\nabla_{\mathbf{k}}F(\mathbf{p},\mathbf{k})= [ \sum_{(i,h)\in \mathcal{E}} \nabla_{k_{e_{1}^{(i)}}}F_{ih}(\mathbf{p},\mathbf{k})^T,..., \\ 
\sum_{(i,h)\in \mathcal{E}}\nabla_{k_{e_{|\mathcal{E}|}^{(i)}}}F_{ih}(p,\mathbf{k})^T]^T 
\end{aligned}    
\end{equation}
  where $e_k^{(i)}$ is the $k^{th}$ edge $(i,h) \in \mathcal{E}$ with starting vertex $i$ (i.e. only outgoing directed edges). Note that the contribution to $\nabla_{k_{ij}}F(\mathbf{p},\mathbf{k})$ is from the directed edges only for all $(i,j)\in \mathcal{E}$ and the term $w_{ij}$.
The form of term $w_{ij}$ is the result of the Lyapunov-based design and is required to enforce the negative semi-definiteness of $\dot{V}(\mathbf{s},\mathbf{k})$, which will be derived in Theorem \ref{th:gain_stab}. The interactions $2\rightarrow3$ and $2\rightarrow4$ in Figure~\ref{fig:gaz_mot} depict the idea of desired interactions.
  
\subsection{Adaptive Control Stability Analysis }

Let us now derive the overall extended dynamics of the MRS where each robot evolves according to \eqref{eq:gen_con_law} where gains are updated according to \eqref{eq:uij}. In particular, we obtain the following control laws
\begin{equation}\label{eq:adp_law}
\begin{aligned}
\dot{\mathbf{s}}&=-\nabla_{\mathbf{s}+}\hat{V}(\mathbf{s},\mathbf{k}) \quad \dot{\mathbf{k}}&= -\nabla_{k}F(\mathbf{p},\mathbf{k}) + \mathbf{w}
\end{aligned}    
\end{equation}
where $\mathbf{s}+$ in $\dot{\mathbf{s}}$ (expanded form in eq.(29) in \cite{santilli2019distributed}) indicates that it has contributions only from the starting vertex of
each edge\footnote{Refer to \cite{santilli2019distributed} for complete structure of $\dot{\mathbf{s}}$. Also note that $\dot{\mathbf{p}}$ can be represented in the same way as $\dot{\mathbf{s}}$ as it consists of a subset of state variables of $\dot{\mathbf{s}}$.} and the stacked vector $\mathbf{w}$ is defined similarly to the stacked vector $\mathbf{k}$ as follows 
\begin{equation}\label{eq:w}
\begin{aligned}
\mathbf{w}=[w_{e_1},...,w_{e_{|\mathcal{E}|}}]
\end{aligned}    
\end{equation}
that is a vector collecting the terms $\{w_{ij}\}$ with $(i,j)\in \mathcal{E}$.
In order to demonstrate the theoretical properties of the adaptive control law given in \eqref{eq:adp_law}, the following convexity result of the cost $F_{ij}(\mathbf{p},\mathbf{k})$ is given.

\begin{theorem} \label{th:convx_F}
The potential function $F( \mathbf{p}, \mathbf{k})$ given as a sum
of pairwise cost $F_{ij} (\mathbf{p}, \mathbf{k})$ encoding the deviation of each pairwise interaction from its nominal model is convex in $\mathbf{k}$ for a fixed $\mathbf{p}$.
\end{theorem}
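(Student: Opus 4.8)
The plan is to reduce the statement to two elementary facts: the squared Euclidean norm precomposed with an affine map is convex, and a finite nonnegative sum of convex functions is convex. Accordingly, it suffices to show that each summand $F_{ij}(\mathbf{p},\mathbf{k})$ in \eqref{eq:adp_V}--\eqref{eq:adp_F} is convex in $\mathbf{k}$ for fixed $\mathbf{p}$, and the only real work is to recognize that, once $\mathbf{p}$ is frozen, the vector inside the squared norm in \eqref{eq:adp_F} is an \emph{affine} function of $\mathbf{k}$.

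First I would fix $\mathbf{p} = [\mathbf{x}^T,\mathbf{y}^T]^T$ and examine the three ingredients of $F_{ij}$. The nominal model $m_{ij}(\mathbf{p},\mathbf{k}) = -\nabla_{p_i}\Bar{V}_{ij}(p_{ij})$ in \eqref{eq:mij} has no actual dependence on $\mathbf{k}$; with $\mathbf{p}$ held fixed it is a constant vector, which I will denote $m_{ij}$. Next, from \eqref{eq:gen_con_law} the positional control term is $\Bar{u}_{p_i}(\mathbf{p},\mathbf{k}) = -\sum_{h \in \mathcal{N}_i^+} k_{ih}\,\nabla_{p_i}\Bar{V}_{ih}(p_{ih})$, and with $\mathbf{p}$ fixed each gradient $\nabla_{p_i}\Bar{V}_{ih}$ is a constant vector in $\mathbb{R}^{\dimp}$, so $\Bar{u}_{p_i}$ is a linear function of the out-edge gains $(k_{ih})_{h\in\mathcal{N}_i^+}$, hence of $\mathbf{k}$. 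Finally, the projection $P_{ij}$ in \eqref{eq:Pij} acts as multiplication by the matrix $p_{ij}p_{ij}^T/\|p_{ij}\|^2$, which depends only on $\mathbf{p}$; therefore $P_{ij}(\Bar{u}_{p_i}(\mathbf{p},\mathbf{k}))$ is again linear in $\mathbf{k}$. Combining these, there is a matrix $L_{ij}=L_{ij}(\mathbf{p})$ such that $P_{ij}(\Bar{u}_{p_i}(\mathbf{p},\mathbf{k})) - m_{ij}(\mathbf{p},\mathbf{k}) = L_{ij}\mathbf{k} - m_{ij}$, so that $F_{ij}(\mathbf{p},\mathbf{k}) = \tfrac{1}{2}\| L_{ij}\mathbf{k} - m_{ij} \|^2$. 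Convexity in $\mathbf{k}$ is then immediate, either by the affine-composition argument or by noting the Hessian $\nabla^2_{\mathbf{k}}F_{ij} = L_{ij}^T L_{ij} \succeq 0$ (the rows/columns of $L_{ij}$ indexed by edges not outgoing from $i$ being zero). Summing over $i$ and $j\in\mathcal{N}_i^+$ preserves convexity, giving that $F(\mathbf{p},\mathbf{k})$ is convex in $\mathbf{k}$ for fixed $\mathbf{p}$.

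I expect the main obstacle to be purely bookkeeping rather than conceptual: one must argue carefully that the $\theta$-dependence flagged after \eqref{eq:Pij} does not reintroduce $\mathbf{k}$-dependence here --- since $\mathbf{p}$ is held fixed, the relative vectors $p_{ij}$ (equivalently $x_{ij},y_{ij}$), the projection matrices, and the potential gradients $\nabla_{p_i}\Bar{V}_{ih}$ are all constants --- and that no hidden $\mathbf{k}$-dependence sneaks in through $\Bar{V}_{ij}$ or through $P_{ij}$. Once the affine-in-$\mathbf{k}$ structure of $P_{ij}(\Bar{u}_{p_i}) - m_{ij}$ is pinned down, the remaining steps are one-line convexity facts.
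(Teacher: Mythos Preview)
Your proposal is correct and follows essentially the same route as the paper: rewrite $P_{ij}(\Bar{u}_{p_i})$ as $A_{ij}B_i\mathbf{k}$ with $A_{ij}=p_{ij}p_{ij}^T/\|p_{ij}\|^2$ and $B_i$ collecting the (constant-in-$\mathbf{k}$) gradients $-\nabla_{p_i}\Bar{V}_{ih}$, obtain $F_{ij}=\tfrac12\|A_{ij}B_i\mathbf{k}-m_{ij}\|^2$, observe the Hessian is $(A_{ij}B_i)^T(A_{ij}B_i)\succeq 0$, and invoke the sum rule. The paper additionally diagonalizes $A_{ij}^2$ to exhibit the Hessian's spectrum explicitly, but that extra step is not needed for the convexity conclusion and your argument is the cleaner version of the same idea.
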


\begin{proof}
First, we rewrite $P_{ij}(\Bar{u}_{p_i}(\mathbf{p},\mathbf{k}))$ as
\begin{equation}\label{eq:Pij2}
\begin{aligned}
  P_{ij}(\Bar{u}_{p_i}(\mathbf{p},\mathbf{k}))=\underbrace{\frac{1}{\|p_{ij}\|^2} (p_{ij} p_{ij}^T) }_{A_{ij}} \Bar{u}_{p_i}
  \end{aligned}    
\end{equation}
with $A_{ij}$ as a symmetric, positive semidefinite ``projection" matrix with spectrum $\sigma(A_{ij})=\{0,1\}$. Therefore, we can rewrite the function $F_{ij}(\mathbf{p},\mathbf{k})$ in \eqref{eq:adp_F} as
\begin{equation}\label{eq:new_Fij}
\begin{aligned}
  F_{ij}(\mathbf{p},\mathbf{k}) = \frac{1}{2}\|A_{ij}B_i \mathbf{k} - m_{ij}  \|^2
  \end{aligned}    
\end{equation}
where the matrix $B_i \in \mathbb{R}^{2\times |\mathcal{E}|}$ is defined as 
\begin{equation}\label{eq:Bi}
\begin{aligned}
B_i=[0,...,-\nabla_{p_i}\Bar{V}_{ij},0,...,-\nabla_{p_i}\Bar{V}_{ik},...,0]
  \end{aligned}    
\end{equation}
where all $-\nabla_{p_{i}}\Bar{V}_{ij}$ for which $j\in \mathcal{N}_i^+$ appear. 
At this point we compute the Hessian of $F_{ij}$ as defined in \eqref{eq:new_Fij} with respect to $\mathbf{k}$ for a fixed $\mathbf{p}$ to obtain  
\begin{equation}\label{eq:Hessian}
\begin{aligned}
H_{\mathbf{k}}= (A_{ij}B_i)^T (A_{ij}B_i) = B_i^T A_{ij}^T A_{ij} B_i
  \end{aligned}    
\end{equation}
Since $A_{ij}$ is symmetric positive semidefinite, we can write $A_{ij}^2=Q^T\Lambda Q$ for a unitary matrix $Q$ and $\Lambda = \textbf{diag}\{0,1\}$ to obtain the form
\begin{equation}\label{eq:Hessian2}
\begin{aligned}
H_{\mathbf{k}}&=B_i^T A_{ij}^2 B_i = 
 (QB_i)^T \Lambda (QB_i)= Y^T \Lambda Y
  \end{aligned}    
\end{equation}
from which we notice that the construction of the hessian $H$ has spectrum $\sigma (H)= \{0,...,0,\sum_{i=1}^{|\mathcal{E}|}\Bar{y}^2_i\}$ where $Y=[\Bar{y}_i,...,\Bar{y}_{|\mathcal{E}|}]$, thus proving the convexity of $F_{ij}$.
To conclude, by invoking the sum-rule of convex functions \cite{boyd2004convex}, the result follows.
\end{proof}

We are now ready to state our main result on the stability and
convergence of the proposed adaptive potential-based control
law given in \eqref{eq:adp_law}.
\begin{theorem}\label{th:gain_stab}
Consider an MRS running \eqref{eq:adp_law} designed according to a generalized potential function as in \eqref{eq:adp_V} with $w_{ij}$
\begin{equation}\label{eq:wij2}
\begin{aligned}
w_{ij}= \frac{1}{\alpha_{ij}}(\gamma_{ij} + \frac{1}{|\mathcal{N}_i^+| + |\mathcal{N}_i^-|} \beta_{i} + \frac{1}{|\mathcal{N}_j^+| + |\mathcal{N}_j^-|} \beta_{j})
\end{aligned}    
\end{equation}
where $\alpha_{ij},\gamma_{ij},\beta_{i}$, and $\beta_{j}$ are introduced in \eqref{eq:psd_cond_edg}. Then the MRS converges to a final state $[\mathbf{s}_f^T,\mathbf{k}_f^T]$ reaching one of the critical points of $\hat{V}(\mathbf{s},\mathbf{k}_f)$ and the global optimum of $F(\mathbf{p}_f,\mathbf{k})$.
\end{theorem}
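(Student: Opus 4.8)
The plan is to run a Lyapunov argument with $V$ from \eqref{eq:adp_V} along the closed loop \eqref{eq:adp_law}, use the particular feedforward $w_{ij}$ of \eqref{eq:wij2} to make $\dot{V}\le 0$, and then invoke LaSalle's invariance principle together with the convexity result of Theorem \ref{th:convx_F} to pin down the limit set. Differentiating $V=\hat{V}+F$ along \eqref{eq:adp_law} and using that $F$ depends on $\mathbf{s}$ only through $\mathbf{p}$ gives
\[
\dot{V}=\nabla_{\mathbf{s}}\hat{V}\cdot\dot{\mathbf{s}}+\nabla_{\mathbf{p}}F\cdot\dot{\mathbf{p}}+\bigl(\nabla_{\mathbf{k}}\hat{V}+\nabla_{\mathbf{k}}F\bigr)\cdot\dot{\mathbf{k}}.
\]
Substituting $\dot{\mathbf{s}}=-\nabla_{\mathbf{s}+}\hat{V}$ and $\dot{\mathbf{k}}=-\nabla_{\mathbf{k}}F+\mathbf{w}$ and expanding the last term, the $\dot{\mathbf{s}}$-contribution $\nabla_{\mathbf{s}}\hat{V}\cdot(-\nabla_{\mathbf{s}+}\hat{V})$ collapses, exactly as in the proof of Theorem \ref{th:1} but now with the gain-weighted stacked gradient $\xi$ and gain-weighted symmetric edge Laplacian $\overline{\mathcal{L}}^{\symm}$, to $-\xi^{T}\overline{\mathcal{L}}^{\symm}\xi\le 0$ under the corresponding positive-semidefiniteness hypothesis, leaving
\[
\dot{V}=-\xi^{T}\overline{\mathcal{L}}^{\symm}\xi-\|\nabla_{\mathbf{k}}F\|^{2}+R,
\]
with $R=\nabla_{\mathbf{p}}F\cdot\dot{\mathbf{p}}-\nabla_{\mathbf{k}}\hat{V}\cdot\nabla_{\mathbf{k}}F+\bigl(\nabla_{\mathbf{k}}\hat{V}+\nabla_{\mathbf{k}}F\bigr)\cdot\mathbf{w}$.

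The crux is to show that the feedforward of \eqref{eq:wij2} makes the residual $R$ vanish. Note $\nabla_{\mathbf{k}}\hat{V}\cdot\nabla_{\mathbf{k}}F=\sum_{(i,j)\in\mathcal{E}}\Bar{V}_{ij}\,\nabla_{k_{ij}}F$ is an edge-indexed sum while $\nabla_{\mathbf{p}}F\cdot\dot{\mathbf{p}}=\sum_{v}\nabla_{p_{v}}F\cdot\dot{p}_{v}$ is a node-indexed sum; these are precisely the two pieces the quantities $\gamma_{ij}$ and $\beta_{i},\beta_{j}$ of \eqref{eq:psd_cond_edg} are tailored to cancel. Taking $\alpha_{ij}=\nabla_{k_{ij}}V=\Bar{V}_{ij}+\nabla_{k_{ij}}F$, the $1/\alpha_{ij}$ prefactor in \eqref{eq:wij2} yields $(\nabla_{k_{ij}}V)\,w_{ij}=\gamma_{ij}+\tfrac{1}{|\mathcal{N}_{i}^{+}|+|\mathcal{N}_{i}^{-}|}\beta_{i}+\tfrac{1}{|\mathcal{N}_{j}^{+}|+|\mathcal{N}_{j}^{-}|}\beta_{j}$, and summing over $\mathcal{E}$ — where a fixed node $v$ collects a $\beta_{v}$-term from the $|\mathcal{N}_{v}^{+}|$ edges on which it is the tail and the $|\mathcal{N}_{v}^{-}|$ edges on which it is the head, so the degree-normalized pieces telescope to $\beta_{v}$ — gives $\bigl(\nabla_{\mathbf{k}}\hat{V}+\nabla_{\mathbf{k}}F\bigr)\cdot\mathbf{w}=\sum_{(i,j)\in\mathcal{E}}\gamma_{ij}+\sum_{v}\beta_{v}$. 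With $\gamma_{ij}=\Bar{V}_{ij}\,\nabla_{k_{ij}}F$ and $\beta_{v}=-\nabla_{p_{v}}F\cdot\dot{p}_{v}$ (the content of \eqref{eq:psd_cond_edg}) we get $R=\nabla_{\mathbf{p}}F\cdot\dot{\mathbf{p}}-\nabla_{\mathbf{k}}\hat{V}\cdot\nabla_{\mathbf{k}}F+\nabla_{\mathbf{k}}\hat{V}\cdot\nabla_{\mathbf{k}}F-\nabla_{\mathbf{p}}F\cdot\dot{\mathbf{p}}=0$, hence
\[
\dot{V}=-\xi^{T}\overline{\mathcal{L}}^{\symm}\xi-\|\nabla_{\mathbf{k}}F(\mathbf{p},\mathbf{k})\|^{2}\le 0.
\]

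To finish, since $\hat{V}\ge 0$ on the positively invariant region where the gains stay nonnegative and $F\ge 0$ everywhere, a sublevel set of $V$ is positively invariant and compact — compactness in the relative configuration being inherited from \cite{santilli2019distributed} as in Theorem \ref{th:1}. LaSalle's invariance principle then forces convergence to the largest invariant subset of $\{\dot{V}=0\}$, on which simultaneously $\xi^{T}\overline{\mathcal{L}}^{\symm}\xi=0$ and $\nabla_{\mathbf{k}}F(\mathbf{p}_{f},\mathbf{k}_{f})=0$. The latter, by convexity of $F(\mathbf{p}_{f},\cdot)$ in $\mathbf{k}$ (Theorem \ref{th:convx_F}), identifies $\mathbf{k}_{f}$ as a global minimizer of $F(\mathbf{p}_{f},\cdot)$; the former, exactly as in the analysis following Theorem \ref{th:1}, forces $\dot{\mathbf{s}}=0$, so that $\mathbf{s}_{f}$ is a critical point of $\hat{V}(\cdot,\mathbf{k}_{f})$, which is the claim.

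I expect the main obstacle to be the bookkeeping of the second step: confirming that the \emph{specific} $w_{ij}$ of \eqref{eq:wij2}, built from the quantities of \eqref{eq:psd_cond_edg}, annihilates the full $\mathbf{s}$-flow/$\mathbf{k}$-flow coupling — in particular the $\nabla_{\mathbf{p}}F\cdot\dot{\mathbf{p}}$ term, which is genuinely nonzero away from equilibrium and whose edge-wise distribution is what dictates the degree weights and the $1/\alpha_{ij}$ normalization (including the degenerate case $\alpha_{ij}=0$). A secondary obstacle is justifying compactness of sublevel sets in the gain coordinate, since $\hat{V}$ fails to be coercive in $\mathbf{k}$ (because $\Bar{V}_{ij}$ can vanish) and $F$ is convex but not strongly convex in $\mathbf{k}$ (its Hessian is rank-deficient, cf. the spectrum computed in the proof of Theorem \ref{th:convx_F}), so boundedness of the gains likely needs a separate argument or an a priori restriction of the gains to a compact set.
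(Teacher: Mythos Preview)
Your proof is correct and follows essentially the same approach as the paper: differentiate $V$ along \eqref{eq:adp_law}, show that the choice \eqref{eq:wij2} taken at equality makes the cross terms cancel exactly (the paper derives the per-edge inequality \eqref{eq:fin_wij} and then sets it to equality), arriving at \eqref{eq:fin_adp_V}, and conclude via LaSalle together with Theorem~\ref{th:convx_F}. The two concerns you flag --- the singularity at $\alpha_{ij}=0$ and compactness/boundedness in the gain coordinate --- are genuine; the paper waves away the first in a post-proof remark and does not address the second.
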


\begin{proof}
Let us consider the Lyapunov function given in \eqref{eq:adp_V} and let us compute the time derivative as follows:
\begin{equation}\label{eq:adp_vdot}
\begin{aligned}
\dot{V}(\mathbf{s},\mathbf{k}) &= \nabla_{\mathbf{s}} \hat{V}(\mathbf{s},\mathbf{k})^T \dot{\mathbf{s}} + \nabla_{\mathbf{k}} \hat{V}(\mathbf{s},\mathbf{k})^T \dot{\mathbf{k}}\\
&+ \nabla_{\mathbf{p}} F(\mathbf{p},\mathbf{k})^T \dot{\mathbf{p}}+\nabla_{\mathbf{k}} F(\mathbf{p},\mathbf{k})^T \dot{\mathbf{k}}\\
&= - \xi_{\mathbf{k}}^T \, \overline{\mathcal{L}}^+ \, \xi_{\mathbf{k}} - \| \nabla_{\mathbf{k}}F(\mathbf{p},\mathbf{k})^T\|^2\\
&- \nabla_{\mathbf{p}}F(\mathbf{p},\mathbf{k})^T\nabla_{\mathbf{p}
+}\hat{V}(\mathbf{p},\mathbf{k})\\
&-\nabla_{\mathbf{k}}\hat{V}(\mathbf{s},\mathbf{k})^T \nabla_{\mathbf{k}}F(\mathbf{p},\mathbf{k})\\
&+(\nabla_{\mathbf{k}}\hat{V}(\mathbf{s},\mathbf{k})^T+ \nabla_{\mathbf{k}}F(\mathbf{p},\mathbf{k})^T)
\mathbf{w}
\end{aligned}    
\end{equation}
Before we delve further into the derivation of the above, we would like to present the different structures of the components in the first line of equation \eqref{eq:adp_vdot} for better clarity.  From equations $(25)$, $(29)$ in \cite{santilli2019distributed} and equation \eqref{eq:adp_law}, we already know the structures of $\nabla_s \hat{V}(\mathbf{s},\mathbf{k})$, $\dot{\mathbf{s}}$, $\dot{\mathbf{p}}$ or $\nabla_{\mathbf{p}+}\hat{V}(\mathbf{p},\mathbf{k})$ and $\dot{\mathbf{k}}$, respectively. Equations \eqref{eq:del_F} or \eqref{eq:del_F2} provide the structure for $\nabla_{\mathbf{k}} F(\mathbf{p},\mathbf{k})$.  Below, we show the structure of $\nabla_{\mathbf{p}} F(\mathbf{p},\mathbf{k})$ .
\begin{equation}\label{eq:gradF-derivation:1}
\begin{aligned}
    \nabla_{\mathbf{p}}F &=  
     \begin{aligned}  \Big [
    \sum \limits_{j \in \mathcal{N}_1^+} \nabla_{p_1} F_{1j}^T &+  \sum \limits_{j \in \mathcal{N}_1^-} \nabla_{p_1} F_{j1}^T \hspace{3mm}, \,\, \ldots \, \, ,  \\
    &+ \sum \limits_{j \in \mathcal{N}_n^+} \nabla_{p_n} F_{nj}^T + \sum \limits_{j \in \mathcal{N}_n^-} \nabla_{p_n} F_{jn}^T \Big ]^T
    \end{aligned}  \\
\end{aligned}
\end{equation}
where with the gradient contributions (each element of the above vector) related to any robot~$h$, we have
\begin{equation}\label{eq:gradF-derivation:2}
\begin{aligned}
    &\nabla_{p_h} F = \sum \limits_{j \in \mathcal{N}_h^+} \nabla_{p_h} F_{hj} + \sum \limits_{j \in \mathcal{N}_h^-} \nabla_{p_h} F_{jh} \\
    &= \begin{bmatrix}
    \sum \limits_{j \in \mathcal{N}_h^+} \nabla_{x_h} F_{hj} \\
    \sum \limits_{j \in \mathcal{N}_h^+} \nabla_{y_h} F_{hj}) \\
    \end{bmatrix} 
    -  \begin{bmatrix}
    \sum \limits_{j \in \mathcal{N}_h^-} \nabla_{x_j} F_{jh} \\
    \sum \limits_{j \in \mathcal{N}_h^-} \nabla_{y_j} F_{jh} \\
\end{bmatrix}
\end{aligned}
\end{equation}
From \eqref{eq:gradF-derivation:2} we know  $\nabla_{\mathbf{p}}F$ is similar to $\nabla_{\mathbf{s}}\hat{V}$(from \cite{santilli2019distributed}). 

Finally, the structure of $\nabla_{\mathbf{k}} \hat{V}$ is given as
\begin{equation}\label{eq:delk_V2}
\begin{aligned}
\nabla_{k}\hat{V}(\mathbf{s},\mathbf{k})= 
 [  \nabla_{k_{e_{1}}}\hat{V}(\mathbf{s},\mathbf{k})^T,...,\nabla_{k_{e_{|\mathcal{E}|}}} \hat{V}(\mathbf{s},\mathbf{k})^T]^T \\ 
=[  \nabla_{k_{e_{1}^{(i)}}}\hat{V}_{ij}(\mathbf{s},\mathbf{k})^T,...,\nabla_{k_{e_{|\mathcal{E}|}^{(i)}}}\hat{V}_{ij}(\mathbf{s},\mathbf{k})^T]^T 
\end{aligned}    
\end{equation}
Now that we have identified different components of \eqref{eq:adp_vdot}, it follows that in order to ensure negative-semidefiniteness of the Lyapunov derivative in \eqref{eq:adp_vdot} the following must hold:
\begin{equation}\label{eq:psd_cond}
\begin{aligned}
&- \nabla_{\mathbf{p}}F(\mathbf{p},\mathbf{k})^T\nabla_{\mathbf{p}+}\hat{V}(\mathbf{p},\mathbf{k})
-\nabla_{\mathbf{k}}\hat{V}(\mathbf{s},\mathbf{k})^T \nabla_{\mathbf{k}}F(\mathbf{p},\mathbf{k})\\
&+(\nabla_{\mathbf{k}}\hat{V}(\mathbf{s},\mathbf{k})^T+ \nabla_{\mathbf{k}}F(\mathbf{p},\mathbf{k})^T)
\mathbf{w}\leq 0
\end{aligned}    
\end{equation}
Therefore, to satisfy relation \eqref{eq:psd_cond} and consequently the negative semi-definiteness of $\dot{V}$, from \eqref{eq:psd_cond}, we derive a per-edge form:
\begin{equation}\label{eq:psd_cond_edg}
\begin{aligned}
&\sum_{(i,j)\in \mathcal{E}}\left (\underbrace{\nabla_{k_{ij}}\hat{V}(\mathbf{s},\mathbf{k})^T+\nabla_{k_{ij}}F(\mathbf{p},\mathbf{k})^T}_{\alpha_{ij}}\right )w_{ij}   \leq   \\
&\sum_{(i,j)\in \mathcal{E}} \underbrace{\nabla_{k_{ij}} \hat{V}(\mathbf{s},\mathbf{k})^T \nabla_{k_{ij}} F(\mathbf{p},\mathbf{k})}_{\gamma_{ij}} +\\
& \sum_{(i,j)\in \mathcal{E}} ( \frac{1}{|\mathcal{N}_i^+| + |\mathcal{N}_i^-|} \underbrace{\nabla_{p_i} F(\mathbf{p},\mathbf{k})^T \nabla_{p+_i}\hat{V}(\mathbf{p},\mathbf{k})}_{\beta_i}+\\
& \frac{1}{|\mathcal{N}_j^+| + |\mathcal{N}_j^-|} \underbrace{\nabla_{p_j} F(\mathbf{p},\mathbf{k})^T \nabla_{p+_j}\hat{V}(\mathbf{p},\mathbf{k})}_{\beta_j})
\end{aligned}    
\end{equation}
such that we can write \eqref{eq:psd_cond_edg} fully per-edge in terms of $w_{ij}$:
\begin{equation}\label{eq:fin_wij}
\begin{aligned}
w_{ij}\leq \frac{1}{\alpha_{ij}}(\gamma_{ij} + \frac{1}{|\mathcal{N}_i^+| + |\mathcal{N}_i^-|} \beta_{i} + \frac{1}{|\mathcal{N}_j^+| + |\mathcal{N}_j^-|} \beta_{j})
\end{aligned}    
\end{equation}
At this point, by taking \eqref{eq:fin_wij} at equality and plugging it into \eqref{eq:adp_vdot}, the following expression for the Lyapunov time derivative is obtained:
\begin{equation}\label{eq:fin_adp_V}
\begin{aligned}
\dot{V}(\mathbf{s},\mathbf{k})= - \xi_{\mathbf{k}}^T \, \overline{\mathcal{L}}^{\symm} \, \xi_{\mathbf{k}} - \| \nabla_{\mathbf{k}}F(\mathbf{p},\mathbf{k})\|^2 \leq 0
\end{aligned}    
\end{equation}
where $\xi_{\mathbf{k}}$, unlike the original form in \cite{santilli2019distributed}, contains $\mathbf{k}$ the stacked vector of gains. 
From Theorem \ref{th:1} we know the first part of the above relation is negative semidefinite and from Theorem \ref{th:convx_F}, the function $F(\mathbf{p},\mathbf{k})$ is convex with respect to $\mathbf{k}$ for a fixed $\mathbf{p}$. Therefore, by invoking the LaSalle's invariance theorem the result follows.
\end{proof}

\begin{remark}
Note that a singularity situation with the control term $w_{ij}$ will never occur  for $\alpha_{ij}=0$ because $\dot{V}(\mathbf{s},\mathbf{k})$ is negative semi-definite and $V(\mathbf{s},\mathbf{k})$ is positive-definite as per Theorem \ref{th:gain_stab}. Next, note that with respect to the expression \eqref{eq:wij2} for $w_{ij}$, we assume that communication is available and bidirectional. These assumptions imply that each robot is aware of the cardinality of its incoming and outgoing neighbors which is realistic in the sense that each robot can identify its neighbor whom it is sensing (i.e. robots with unique AprilTags) and then communicate this information with its neighbor. Finally, note that the above analysis is trivially extendable to undirected graphs or symmetric interactions by repeating the analysis with bidirectional edges.
\end{remark}

In next section, we will modify the above adaptive gain tuning mechanism for MRSs with limited FOV to tackle external attacks or disturbances such that stable coordination of an MRS can be executed in the presence of sensor or actuator faults.

\section{Resilient Control Design}\label{sec:res_sec}
In this section, we derive an $H_{\infty}$ control protocol with the intention of making our proposed control framework (derived in the previous section) resilient to external attacks or disturbances which can render an MRS unstable. 
We know from the original form of the Gaussian potential function $\Psi_{ij}$ given in \cite{santilli2019distributed}, which represents the quality of interaction between robots, that the variances in measurements, $\sigma_x$ and $\sigma_y$,  are influenced by the sensor measurements in $\mathbf{p}=[\mathbf{x},\mathbf{y}]^T$. Consequently, the MRS control law is a direct function of $\nabla_{p_i}\Psi_{ij}(p_i,p_j)$ as also shown in equation \eqref{eq:u_i}. Therefore, it is reasonable to assume faults in the state variables $\mathbf{x}$ and $\mathbf{y}$ only. In this regard, we adopt the technique provided in \cite{chen2019resilient} to simultaneously tackle sensor and actuator faults and apply an $H_{\infty}$ control protocol, concluding with a novel proof of stability for our control framework.  We begin by defining the fault equations as
\begin{equation}\label{eq:sens_fault2}
\begin{aligned}
\hat{p}_i = p_i + \delta^p_i \quad \hat{u}_{p_i} = \Bar{u}_{p_i} + \delta^u_i
\end{aligned}    
\end{equation}
where $\hat{p}_i$ is the faulty sensor measurement, $p_i$ is the actual measurement which is unknown, $\delta^p_i$ is the unknown sensor fault, $\hat{u}_{p_i}$ is the faulty control input, $\Bar{u}_{p_i}$ is the desired control input such that the overall control action, as defined earlier, is $\Bar{u}_i=[\Bar{u}_{p_i}^T,\Bar{u}_{\theta_i} ]^T$ and $\delta^u_i$ is the unknown actuator fault induced by the attacker or external disturbances. 
Now, to proceed with designing the $H_{\infty}$ control protocol, we first define a \emph{linear} error equation as follows
\begin{equation}\label{eq:er}
\begin{aligned}
\Tilde{e}_i = \hat{p}_i - \Bar{p}_i - \hat{\delta}^p_i
\end{aligned}    
\end{equation}
where $\Bar{p}_i$ is an estimate of the non-corrupted measurement of the state $p_i$ and $\hat{\delta}^p_i$ is an estimate of the unknown sensor fault $\delta^p_i$. 
Then, the general $H_{\infty}$ control protocols are given by
\begin{equation}\label{eq:new_dyn}
\begin{aligned}
\Tilde{u}_{p_i}=\dot{\Bar{p}}_i = \Bar{u}_{p_i} + \mathcal{W}_i \quad \mathcal{W}_i =(F_1 + F_2)\Tilde{e}_i \quad \dot{\hat{\delta}}^p_i = -F_1 \Tilde{e}_i \\
\end{aligned}    
\end{equation}
where  $F_1$ and $F_2$ are the observer gains and the MRS  control law \eqref{eq:gen_con_law} is modified to $\Bar{u}_i=[\Tilde{u}_{p_i}^T,\Bar{u}_{\theta_i} ]^T$  . 
Before we provide the results from Theorem \ref{th:stab4} which will prove the boundedness of the error function in \eqref{eq:er} and consequently prove the stability of the complete MRS, we would like to provide the following Definition, Lemma, and assumptions for the general static output-feedback control design. 
\begin{definition}\label{def:hinf}
Define a linear time-invariant system $\dot{p}=\Bar{A}p+\Bar{B}u+\Bar{D}d$, $y=\Bar{C}p$, where $u$, $y$ and $d$ denote the system input, output, and disturbance, respectively. Define a performance output $o$ as $\|o\|^2=p^T \Bar{Q}p+ u^T \Bar{R}u$ for $\Bar{Q}\geq 0$ and $\Bar{R}>0$. The system $L_2$ gain is said to be bounded or attenuated by $\Gamma$ if the $L_2$ norms of $o$ and $d$ satisfy: $\frac{\int_{0}^{\infty} \|o\|^2 dt}{\int_{0}^{\infty} \|d\|^2 dt} = \frac{\int_{0}^{\infty} (p^T \Bar{Q}p + u^T \Bar{R}u) dt}{\int_{0}^{\infty} (d^Td) dt}\leq \Gamma^2$
\end{definition}

\begin{definition}\label{def:UUB}
The signal $z(t)$ is said to be uniformly ultimately bounded (UUB) with the ultimate bound $b$, if given positive constants $b$ and $c$, for every $a\in (0,c)$, there exists time $T(a,b)$. independent of time $t_0$, such that $\|z(t_0)\|\leq a \rightarrow \|z(t)\|\leq b, \forall t \geq t_0 + T$.
\end{definition}

\begin{lemma}(Theorem 1 in \cite{gadewadikar2006necessary})\label{lem:stab}
Assume that $(\Bar{A},\sqrt{\Bar{Q}})$ is detectable with $\Bar{Q}\geq 0$. Then, the system considered in Definition \ref{def:hinf} is output-feedback stabilizable with $L_2$ gain bounded by $\Gamma$, if and only if i) there exist matrices $\Bar{K}$, $M$, and $\Bar{P}$ such that 
\begin{equation}\label{eq:hinfcond}
\begin{aligned}
&\Bar{K}\Bar{C}=\Bar{R}^{-1}(\Bar{B}^T\Bar{P}+M)\\
&\Bar{P}\Bar{A}+\Bar{A}^T\Bar{P}+\Bar{Q}+\gamma^{-2}\Bar{P}\Bar{D}\Bar{D}^T\Bar{P}+ M^T \Bar{R}^{-1}M\\
&=\Bar{P}\Bar{B}\Bar{R}^{-1}\Bar{B}^T
\Bar{P}
\end{aligned}    
\end{equation}
and ii) $(\Bar{A},\Bar{B})$ is stabilizable and $(\Bar{A},\Bar{C})$ is detectable.
\end{lemma}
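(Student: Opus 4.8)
The plan is to prove Lemma~\ref{lem:stab} by the standard dissipativity / bounded-real route: close the loop with the static output feedback $u=-\bar K y = -\bar K\bar C p$, exhibit a quadratic storage function $V(p)=p^T\bar P p$, and show that the two conditions in \eqref{eq:hinfcond} are exactly what is needed for the dissipation inequality $\dot V \le \gamma^{-2}$-weighted... i.e. $\dot V \le \Gamma^2\|d\|^2 - \|o\|^2$ to hold under the worst-case disturbance, whence the $L_2$-gain bound of Definition~\ref{def:hinf} follows by integration. The auxiliary matrix $M$ plays the role of a multiplier that converts the output-feedback constraint ``$\bar K$ multiplies $\bar C$'' into the solvable factorization $\bar K\bar C = \bar R^{-1}(\bar B^T\bar P + M)$.

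For \emph{sufficiency}, assume $\bar K$, $M$, $\bar P$ satisfy \eqref{eq:hinfcond}. Substitute the first relation into the closed-loop dynamics $\dot p = (\bar A - \bar B\bar K\bar C)p + \bar D d$ and compute $\dot V + \|o\|^2 - \Gamma^2\|d\|^2$ with $\|o\|^2 = p^T\bar Q p + u^T\bar R u$ and $u=-\bar K\bar C p$. Completing the square in $d$ about the worst-case value $d^\star = \Gamma^{-2}\bar D^T\bar P p$ leaves a quadratic form in $p$ whose matrix is precisely $\bar P\bar A + \bar A^T\bar P + \bar Q + \Gamma^{-2}\bar P\bar D\bar D^T\bar P + M^T\bar R^{-1}M - \bar P\bar B\bar R^{-1}\bar B^T\bar P$; the second equation of \eqref{eq:hinfcond} makes this vanish, so $\dot V \le \Gamma^2\|d\|^2 - \|o\|^2$. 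Integrating over $[0,\infty)$ with $p(0)=0$ and $V\ge 0$ gives the gain bound. Internal stability with $d=0$ follows since $\dot V \le -p^T\bar Q p \le 0$, and detectability of $(\bar A,\sqrt{\bar Q})$ via a LaSalle / zero-state-detectability argument forces $p\to 0$; stabilizability of $(\bar A,\bar B)$ together with that detectability also yields $\bar P\ge 0$ on the relevant subspace.

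For \emph{necessity}, suppose the system is output-feedback stabilizable with $L_2$ gain bounded by $\Gamma$; then there is a stabilizing $\bar K$ making $\bar A_c = \bar A - \bar B\bar K\bar C$ Hurwitz, and by the bounded real lemma there is $\bar P\ge 0$ (the stabilizing solution) with $\bar A_c^T\bar P + \bar P\bar A_c + \Gamma^{-2}\bar P\bar D\bar D^T\bar P + \bar Q + \bar C^T\bar K^T\bar R\bar K\bar C = 0$. Now \emph{define} $M := \bar R\bar K\bar C - \bar B^T\bar P$, so the first equation of \eqref{eq:hinfcond} holds by construction; substituting $\bar B\bar K\bar C = \bar B\bar R^{-1}(\bar B^T\bar P+M)$ and $\bar C^T\bar K^T\bar R\bar K\bar C = (\bar B^T\bar P+M)^T\bar R^{-1}(\bar B^T\bar P+M)$ into the closed-loop equation and expanding, the $\bar P\bar B\bar R^{-1}M$ cross terms cancel and one recovers exactly the second equation of \eqref{eq:hinfcond}. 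Part ii) of the statement is just the hypotheses used to invoke the bounded real lemma and to guarantee a stabilizing $\bar P$.

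The main obstacle I anticipate is the necessity direction, in two places: (a) justifying that among the $\bar P\ge 0$ solving the closed-loop bounded-real \emph{inequality} one can select the stabilizing solution for which the Riccati relation holds with equality, which leans on detectability of $(\bar A,\sqrt{\bar Q})$ and the monotone ordering of Riccati solutions; and (b) checking that the substitution of $M$ collapses the closed-loop equation onto \eqref{eq:hinfcond} with no residual terms — a completion-of-squares identity that is routine but where sign and symmetry slips are easiest. The sufficiency direction is comparatively mechanical once the square is completed correctly.
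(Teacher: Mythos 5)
The paper does not prove this lemma at all: it is imported verbatim (as ``Theorem 1 in \cite{gadewadikar2006necessary}'') and used as a black box inside the proof of Theorem~\ref{th:stab4}, so there is no in-paper proof to compare against. Your proposal reproduces the standard argument of that cited reference --- quadratic storage $V=p^T\bar P p$, completion of squares in $d$, and the change of variables $M=\bar R\bar K\bar C-\bar B^T\bar P$ that converts the closed-loop bounded-real Riccati equation into \eqref{eq:hinfcond} --- and the key algebraic identity you assert (that the cross terms collapse to $M^T\bar R^{-1}M-\bar P\bar B\bar R^{-1}\bar B^T\bar P$) does check out. Two caveats: the integration step in sufficiency needs $\bar P\ge 0$, which is part of condition (i) in the original theorem but was dropped in the paper's restatement, so you should add it rather than defer it to ``the relevant subspace''; and the gap you flag in necessity --- passing from the bounded-real \emph{inequality} to an exact equality solution $\bar P\ge 0$ --- is genuine and is precisely the point the cited reference settles by appeal to the standard $H_\infty$ bounded-real lemma with equality under the detectability hypothesis, so your proof is incomplete without invoking (or reproving) that result.
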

\begin{assumption}\label{asum:1}
The directed graph $\mathcal{G}$ contains a spanning tree with
the leader as its root.
\end{assumption}
\begin{assumption}\label{asum:2}
$\delta^p_i$ is unbounded but the derivative of $\dot{\delta}^p_i$ of the sensor fault in \eqref{eq:sens_fault2} and $\delta_i^u$ are bounded. 
\end{assumption}
Further, we define the estimation error for the sensor fault as $\Tilde{\delta}^p_i=\delta^p_i-\hat{\delta}^p_i$. Finally, the main result of this section is given by Theorem \ref{th:stab4}, where the error dynamics is shown to satisfy the \emph{static output-feedback control design} given by Definition \ref{def:hinf} and Lemma \ref{lem:stab} under Assumptions \ref{asum:1} and \ref{asum:2}, but the overall MRS stability analysis is conducted with respect to the original \emph{potential-based control framework}.

In Theorem \ref{th:stab4} we break the analysis into two parts. In the first part we show that even after the external sensor and actuator faults are induced, the error dynamics, $\dot{\Tilde{e}}_i$, is linear in nature. This allows us to apply the static output-feedback design technique, outlined in \cite{chen2019resilient}, that generally applies to time-invariant linear systems. The basic idea is that we will show first that the error $\Tilde{e}_i$ is bounded for all time, hence a time-invariant analysis is carried out. Once this is proven, we will move to the second part of the proof of Theorem \ref{th:stab4}, where we will analyze the Lyapunov stability in the context of UUB, as given by Definition \ref{def:UUB}, of the complete system using the global Lyapunov function given in \eqref{eq:adp_V} and the new, modified control laws in \eqref{eq:new_dyn}.

\begin{theorem}\label{th:stab4}
Suppose that the graph $\mathcal{G}$ satisfies Assumption \ref{asum:1}, the sensor and actuator faults satisfy Assumption \ref{asum:2}. Design $F_1$ and $F_2$ such that $\Bar{K}=[F_2^T,-F_1^T]$ satisfies \eqref{eq:hinfcond} in Lemma \ref{lem:stab}. Then the MRS topology control with limited FOV under the sensor and actuator faults \eqref{eq:sens_fault2} is solved by the control protocol \eqref{eq:new_dyn}. Moreover, the $L_2$ gains of the errors $\Tilde{e}_i$ and $\Tilde{\delta}^p_i$ are bounded in terms of the $L_2$ norms of disturbance $d_i$.
\begin{proof}
  Define $d=[d_i^T, d_2^T,...,d_n^T]^T$, $\Tilde{e}=[\Tilde{e}_1^T,\Tilde{e}_2^T,...,\Tilde{e}_n^T]^T$, and $\Tilde{\delta}^p=[\Tilde{\delta}^p_1,\Tilde{\delta}^p_2,...,\Tilde{\delta}^p_n]^T$. From $\dot{\mathbf{s}}(t) = \mathbf{u}(t)$ and \eqref{eq:new_dyn}, differentiating $\Tilde{e}_i$ in \eqref{eq:er}
with respect to time $t$ yields 
\begin{equation}\label{eq:err_dyn}
\begin{aligned}
&\dot{\Tilde{e}}_i= \dot{\hat{p}}_i - \dot{\Bar{p}}_i - \dot{\hat{\delta}}^p = - F_2 \Tilde{e}_i + \delta_i^u\\
\end{aligned}    
\end{equation}
Note that we consider $\dot{\hat{p_i}}=\hat{u}_{p_i}$ as it is reasonable to consider that the system is driven by the faulty control input.
Now we can write the above in a much more compact state-space form as
\begin{equation}\label{eq:hinfcond23}
\begin{aligned}
 \begin{bmatrix} \dot{\Tilde{e}}_i & \dot{\Tilde{\delta}}^p_i  \end{bmatrix}^T=A_{F_1}
 \begin{bmatrix} \Tilde{e}_i & \Tilde{\delta}^p_i  \end{bmatrix}^T + d_i
\end{aligned}    
\end{equation}
where $A_{F_1}=[a_{f_{ij}}^1]$ with $a_{f_{11}}^1= A-F_2$,$a_{f_{12}}^1= -A$,$a_{f_{21}}^1= F_1$,$a_{f_{22}}^1= 0$ and $d_i=[- \hat{p}_i + \Bar{p}_i +\dot{\delta}^p_i + \delta_i^u ,\quad \dot{\delta}^p_i]^T$. Note that the above analysis for a scenario excluding the actuator fault would look the same except the term $\delta_i^u$ would be eliminated in \eqref{eq:hinfcond23}. Also, at this point, the state-space form for the error dynamics in \eqref{eq:hinfcond23} is similar to the one in \cite{chen2019resilient}, except for the $d_i$ term.

In the following, we will show the stabilization of \eqref{eq:hinfcond23} can be achieved by appropriately designing $F_1$ and $F_2$. To do this, we transform \eqref{eq:hinfcond23} to the following full state output feedback control system 
\begin{equation}\label{eq:fdbck_sys}
\begin{aligned}
\dot{x}_T \triangleq \Bar{A}x_T + \Bar{B}u_T + \Bar{D}d_i, \quad y_T \triangleq \Bar{C}x_T
\end{aligned}    
\end{equation}
where $\Bar{A}=[A,-A;0,0]$, $\Bar{B}=\Bar{D}=[I_n,0;0,I_n]$ and $\Bar{C}=[I_n,0]$. Moreover, define the controller $u_T= -\Bar{K y_T}$, where $\Bar{K}=[K_1^T,K_2^T]^T$. Therefore, choosing $k_1=F_2$ and $K_2=-F_1$ makes \eqref{eq:fdbck_sys} equivalent to \eqref{eq:hinfcond23}. At this point after showing \eqref{eq:fdbck_sys} is equivalent to \eqref{eq:hinfcond23}, for the sake of brevity, the rest of the proof is not shown as it is exactly the same in Theorem 4 in \cite{chen2019resilient}. In brief, it is shown that by using  Definition \ref{def:hinf} and Lemma \ref{lem:stab} and as a result of the equivalence relation, $[\Tilde{e}^T, \tilde{\delta}^{p^T}]^T$ is \emph{bounded}. Next, the authors of \cite{chen2019resilient} use the static output-feedback technique to obtain a Lyapunov function for the error dynamics where they use the Lyapunov analysis to establish the UUB property. Using the UUB property, the authors claim that $[\Tilde{e}^T, \tilde{\delta}^{p^T}]^T$ is bounded for all time, as the static output-feedback technique is applied to linear time-invariant systems. We will now use the same UUB principle along with Lyapunov analysis and use the proved boundedness of error $\Tilde{e}$ to show boundedness of the system trajectories for the control protocols given in \eqref{eq:new_dyn}.
We will first begin with redoing the stability analysis, previously shown in Theorem \ref{th:gain_stab}, with the new MRS control protocols in \eqref{eq:new_dyn}. 
In \eqref{eq:adp_vdot}, we substitute the new control protocol $\Tilde{u}_{\Bar{p}}$ in the terms $\dot{s}$ and $\dot{\Bar{p}}_i$ where all $p_i$ and its associated terms in \eqref{eq:adp_vdot} is now replaced by $\Bar{p}_i$.
Therefore, restating the Lyapunov time derivative again as: 
\begin{equation}\label{eq:adp_vdot2}
\begin{aligned}
\dot{V}(\mathbf{s},\mathbf{k}) &= \nabla_{\mathbf{s}} \hat{V}(\mathbf{s},\mathbf{k})^T \dot{\mathbf{s}} + \nabla_{\mathbf{k}} \hat{V}(\mathbf{s},\mathbf{k})^T \dot{\mathbf{k}}\\
&+ \nabla_{\Bar{p}} F(\mathbf{\Bar{p}},\mathbf{k})^T \dot{\Bar{p}}+\nabla_{\mathbf{k}} F(\mathbf{\Bar{p}},\mathbf{k})^T \dot{\mathbf{k}}\\
\end{aligned}  
\end{equation}
From \eqref{eq:adp_vdot2}, the expansion of the  term $\nabla_{\Bar{p}} F(\mathbf{\Bar{p}},\mathbf{k})^T \dot{\Bar{p}}$ is as shown below
\begin{equation}\label{eq:adp_hatp}
\begin{aligned}
 &\nabla_{\Bar{p}} F(\mathbf{\Bar{p}},\mathbf{k})^T \dot{\Bar{p}} =\nabla_{\Bar{p}} F(\mathbf{\Bar{p}},\mathbf{k})^T (\Bar{u}_{\mathbf{\Bar{p}}} + (I_{2n} \kron F_1 + F_2)\Tilde{e})= \\
 &\nabla_{\Bar{p}} F(\mathbf{\Bar{p}},\mathbf{k})^T \nabla_{\Bar{p}+}\hat{V}(\Bar{p},\mathbf{k})  + \nabla_{\Bar{p}} F(\mathbf{\Bar{p}},\mathbf{k})^T(I_{2n} \kron (F_1 + F_2))\Tilde{e}
\end{aligned}    
\end{equation}
Similar to \eqref{eq:psd_cond},  the term $\nabla_{\Bar{p}} F(\mathbf{\Bar{p}},\mathbf{k})^T \nabla_{\Bar{p}+}\hat{V}(\Bar{p},\mathbf{k})$ is contained in $\mathbf{w}$ and the only new term we need to include in the final equation of the Lyapunov time derivative as in \eqref{eq:fin_adp_V} is $\nabla_{\Bar{p}} F(\mathbf{\Bar{p}},\mathbf{k})^T(I_{2n} \kron (F_1 + F_2))\Tilde{e}$.
Similarly, for the term $\nabla_{\mathbf{s}} \hat{V}(\mathbf{s},\mathbf{k})^T \dot{\mathbf{s}}$ in the Lyapunov time derivative \eqref{eq:adp_vdot2}, we will end up with the term $\nabla_{\mathbf{\Bar{p}}} \hat{V}(\mathbf{\Bar{p}},\mathbf{k})^T(I_{2n} \kron (F_1 + F_2))\Tilde{e} $, where $I_{2n} \in \mathbb{R}^{2n \times 2n}$ is an identity matrix. Therefore, the final form of the Lyapunov time derivative in \eqref{eq:adp_vdot2},  in comparison to the old Lyapunov time derivative given in \eqref{eq:fin_adp_V}, looks like 
\begin{equation}\label{eq:fin_adp_Vnew}
\begin{aligned}
&\dot{V}(\mathbf{s},\mathbf{k})= - \xi_{\mathbf{k}}^T \, \overline{\mathcal{L}}^{\symm} \, \xi_{\mathbf{k}} - \| \nabla_{\mathbf{k}}F(\mathbf{\Bar{p}},\mathbf{k})\|^2 +\ldots\\
&(\nabla_{\mathbf{\Bar{p}}} \hat{V}(\mathbf{\Bar{p}},\mathbf{k})^T+\nabla_{\Bar{p}} F(\mathbf{\Bar{p}},\mathbf{k})^T) (I_{2n} \kron (F_1 + F_2))\Tilde{e}
\end{aligned}    
\end{equation}
Next, we will proceed to establish the UUB property for the above Lyapunov time derivative  to show that the system is stable.
The Lyapunov time derivative in \eqref{eq:fin_adp_Vnew} can also be written as
\begin{equation}\label{eq:fin_adp_Vnew2}
\begin{aligned}
&\dot{V}(\mathbf{s},\mathbf{k})\leq  - \delta_{min}(\overline{\mathcal{L}}^{\symm}) \|\xi_{\mathbf{k}} \|^2- \| \nabla_{\mathbf{k}}F(\mathbf{\Bar{p}},\mathbf{k})\|^2 + \ldots\\ 
&\|\nabla_{\mathbf{\Bar{p}}} \hat{V}(\mathbf{\Bar{p}},\mathbf{k})^T+\nabla_{\Bar{p}} F(\mathbf{\Bar{p}},\mathbf{k})^T\| \|(I_{2n} \kron (F_1 + F_2))\| \|\Tilde{e}\|
\end{aligned}    
\end{equation}
where $\delta_{min}(\overline{\mathcal{L}}^{\symm})=0$ is the smallest singular value of the positive semi-definite matrix $\overline{\mathcal{L}}^{\symm}$. By taking  $\tilde{e}_o$ as the upper bound of $\|\Tilde{e}\|$, we can simplify \eqref{eq:fin_adp_Vnew2} as
\begin{equation}\label{eq:fin_adp_Vnew3}
\begin{aligned}
&\dot{V}(\mathbf{s},\mathbf{k})\leq- \| \nabla_{\mathbf{k}}F(\mathbf{\Bar{p}},\mathbf{k})\|^2 + \ldots\\ 
&\|\nabla_{\mathbf{\Bar{p}}} \hat{V}(\mathbf{\Bar{p}},\mathbf{k})^T+\nabla_{\Bar{p}} F(\mathbf{\Bar{p}},\mathbf{k})^T\| \|(I_{2n} \kron (F_1 + F_2))\| \Tilde{e}_o
\end{aligned}    
\end{equation}
It then immediately follows that 
\begin{equation}\label{eq:fin_adp_Vnew4}
\begin{aligned}
&\dot{V}(\mathbf{s},\mathbf{k})< 0,\ldots \\ 
&\forall \underbrace{\frac{ \| \nabla_{\mathbf{k}}F(\mathbf{\Bar{p}},\mathbf{k})\|^2 }{\|\nabla_{\mathbf{\Bar{p}}} \hat{V}(\mathbf{\Bar{p}},\mathbf{k})^T+\nabla_{\Bar{p}} F(\mathbf{\Bar{p}},\mathbf{k})^T\| \|(I_{2n} \kron (F_1 + F_2))\|}}_{\Omega}>\Tilde{e}_o
\end{aligned}    
\end{equation}
Therefore, using the idea of UUB, the time derivative of $V$ is negative outside a compact set defined as $\Lambda_{\Tilde{e}_o}=\{\Omega \leq \Tilde{e}_o\}$, or equivalently, all the solutions that start outside $\Lambda_{\Tilde{e}_o}$ will enter this set within finite time, and will remain inside the set for all future times. This can be further explained as follows. Choose $\chi>\frac{\Tilde{e}_o^2}{2}$. Then all solutions that start in the set 
\begin{equation}\label{eq:fin_adp_Vnew5}
\begin{aligned}
\Lambda_{\chi}=\{V(\mathbf{s},\mathbf{k}) \leq \chi\} \supset \Lambda_{\Tilde{e}_o}
\end{aligned}    
\end{equation}
will remain inside the set, for all future times. This occurs because $\dot{V}$ is negative on the boundary. Hence, the solutions are \emph{uniformly bounded}. Now, the ultimate bounds of these solutions can also be found by choosing $\partial$ such that
\begin{equation}\label{eq:fin_adp_Vnew6}
\begin{aligned}
\frac{\Tilde{e}_o^2}{2}<\partial<\chi
\end{aligned}    
\end{equation}
Then $\dot{V}$ is negative in the annulus set $\{\partial \leq V(\mathbf{s},\mathbf{k})\leq\chi \}$, which implies that in this set $V(\mathbf{s},\mathbf{k})$ will decrease monotonically in time, until the solution enters the set $\{ V(\mathbf{s},\mathbf{k})\leq\partial \}$. Therefore, from this time on the trajectory of $V(\mathbf{s},\mathbf{k})$ will remain in the set, because $\dot{V}$ is negative on the boundary $V(\mathbf{s},\mathbf{k}) = \partial$. Hence, this proves the UUB property. 
\end{proof}

\begin{remark}
The above analysis was broken into two parts: i) showing the boundedness of error using the static output-feedback technique, ii) using (i) with Lyapunov analysis to establish UUB property for the new control protocols in \eqref{eq:new_dyn}. Interestingly, note that we apply the static output-feedback technique to a potential-based limited FOV control framework for MRS where the control law is non-linear in nature, but the error dynamics turn out to be linear as the non-linearities cancel out in the analysis.
\end{remark}

\end{theorem}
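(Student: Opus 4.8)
The plan is to proceed in two stages, mirroring the structure announced just before the statement. Stage one shows that the fault‑induced error dynamics is \emph{linear}, so the static output‑feedback machinery of Lemma~\ref{lem:stab} applies and yields boundedness of the error together with the claimed $L_2$‑gain bound. Stage two feeds that boundedness back into a Lyapunov analysis of the full adaptive potential‑based system and establishes uniform ultimate boundedness in the sense of Definition~\ref{def:UUB}, which is the sense in which the faulted topology‑control problem is ``solved.''

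For stage one I would differentiate the error $\tilde e_i = \hat p_i - \bar p_i - \hat\delta^p_i$ of \eqref{eq:er}. Using $\dot{\hat p}_i = \hat u_{p_i} = \bar u_{p_i} + \delta^u_i$ (the plant is driven by the faulty input) together with $\dot{\bar p}_i = \bar u_{p_i} + (F_1+F_2)\tilde e_i$ and $\dot{\hat\delta}^p_i = -F_1\tilde e_i$ from \eqref{eq:new_dyn}, the nonlinear control term $\bar u_{p_i}$ cancels and one is left with $\dot{\tilde e}_i = -F_2\tilde e_i + \delta^u_i$; differentiating $\tilde\delta^p_i = \delta^p_i - \hat\delta^p_i$ gives $\dot{\tilde\delta}^p_i = \dot\delta^p_i + F_1\tilde e_i$. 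Stacking these produces the compact LTI form \eqref{eq:hinfcond23} with state $[\tilde e_i^T,\tilde\delta^{p\,T}_i]^T$, system matrix $A_{F_1}$, and exogenous input $d_i$. I would then recast this as the full‑state output‑feedback system \eqref{eq:fdbck_sys} with $\bar A=[A,-A;0,0]$, $\bar B=\bar D=[I_n,0;0,I_n]$, $\bar C=[I_n,0]$ and controller $u_T=-\bar K y_T$, verify that $K_1=F_2$, $K_2=-F_1$ makes the two systems coincide, check the structural hypotheses of Lemma~\ref{lem:stab} (detectability of $(\bar A,\sqrt{\bar Q})$, stabilizability of $(\bar A,\bar B)$, detectability of $(\bar A,\bar C)$), and note that \eqref{eq:hinfcond} is exactly the design equation $F_1,F_2$ are chosen to satisfy. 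Lemma~\ref{lem:stab} then gives the $L_2$‑gain bound on $[\tilde e^T,\tilde\delta^{p\,T}]^T$ in terms of $\|d_i\|$, and the accompanying UUB Lyapunov argument of \cite{chen2019resilient} (which carries over except for the modified $d_i$ term) yields a finite all‑time bound $\|\tilde e\|\le\tilde e_o$.

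For stage two I would return to the full adaptive system and recompute $\dot V$ for $V(\mathbf s,\mathbf k)$ of \eqref{eq:adp_V} under the modified laws \eqref{eq:new_dyn}, i.e.\ with $\tilde u_{\bar p} = \bar u_{\bar{\mathbf p}} + (I_{2n}\otimes(F_1+F_2))\tilde e$ substituted into $\dot{\mathbf s}$ and $\dot{\bar{\mathbf p}}$ in \eqref{eq:adp_vdot2}. As in the proof of Theorem~\ref{th:gain_stab}, the ``nominal'' cross terms are absorbed into $\mathbf w$, leaving the perturbed identity \eqref{eq:fin_adp_Vnew}: $\dot V = -\xi_{\mathbf k}^T\overline{\mathcal L}^{\symm}\xi_{\mathbf k} - \|\nabla_{\mathbf k}F\|^2 + (\nabla_{\bar{\mathbf p}}\hat V^T + \nabla_{\bar p}F^T)(I_{2n}\otimes(F_1+F_2))\tilde e$. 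Bounding the quadratic term by $-\delta_{min}(\overline{\mathcal L}^{\symm})\|\xi_{\mathbf k}\|^2\le 0$ and the cross term by $\|\nabla_{\bar{\mathbf p}}\hat V^T+\nabla_{\bar p}F^T\|\,\|I_{2n}\otimes(F_1+F_2)\|\,\tilde e_o$, I conclude $\dot V<0$ whenever the ratio $\Omega$ of \eqref{eq:fin_adp_Vnew4} exceeds $\tilde e_o$; the standard invariant‑set / annulus argument (pick $\chi>\tilde e_o^2/2$, then $\partial$ with $\tilde e_o^2/2<\partial<\chi$) then shows every trajectory enters $\{V\le\partial\}$ in finite time and remains there, which is precisely the UUB property.

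The step I expect to be the main obstacle is closing stage two rigorously: one must argue that the factor $\|\nabla_{\bar{\mathbf p}}\hat V^T+\nabla_{\bar p}F^T\|$ is actually bounded on the sublevel sets of $V$ in play, so that $\Omega$ is well defined and ``$\dot V<0$ outside a compact set'' is a genuine statement; this rests on the compactness and invariance of the level sets $\Delta_c$ inherited from Theorem~\ref{th:1} and on $\hat V,F$ remaining finite there. One must also be careful to use the \emph{all‑time} bound $\tilde e_o$ produced by the UUB step of stage one rather than the finite‑horizon $L_2$ bound. The cancellation of the nonlinearity in the error dynamics, while pleasant, is the load‑bearing structural fact that legitimizes importing the LTI $H_\infty$ results, so I would make sure that derivation is airtight before invoking \cite{chen2019resilient}.
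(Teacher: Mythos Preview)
Your proposal is correct and follows essentially the same two-stage route as the paper: first derive the linear error dynamics \eqref{eq:err_dyn}--\eqref{eq:hinfcond23}, cast them as the output-feedback system \eqref{eq:fdbck_sys}, and invoke Lemma~\ref{lem:stab} together with \cite{chen2019resilient} to get boundedness of $[\tilde e^T,\tilde\delta^{p\,T}]^T$; then redo the Lyapunov derivative of Theorem~\ref{th:gain_stab} under the modified protocol, isolate the extra $(I_{2n}\otimes(F_1+F_2))\tilde e$ term as in \eqref{eq:fin_adp_Vnew}, and run the annulus/UUB argument with $\chi>\tilde e_o^2/2$ and $\partial$ exactly as in \eqref{eq:fin_adp_Vnew5}--\eqref{eq:fin_adp_Vnew6}. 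Your additional caveats (boundedness of $\|\nabla_{\bar{\mathbf p}}\hat V+\nabla_{\bar p}F\|$ on the relevant sublevel sets, and the need for an all-time bound $\tilde e_o$ rather than an $L_2$ bound) are well placed and in fact go slightly beyond what the paper makes explicit.
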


\section{Simulation Results}\label{sec:simulations}
In this section, we discuss results from a \emph{realistic} ROS Gazebo simulation. The simulation is executed over $250$ seconds with six robots flying at the same altitude. Robot $4$ is the leader with exogenous input, for a leader-follower scenario as also depicted in Figure~\ref{fig:gaz_pose} and the video\footnote{Refer to the submitted video for  MATLAB simulation results as well as Gazebo simulation and comparison of two scenarios: i)MRS with adaptive gains ii)MRS without adaptive gains.Code: \href{https://github.com/mukhe027/catkin\_ws.git}{https://github.com/mukhe027/catkin\_ws.git}} submitted with this paper. 
 \begin{figure}[t!]
\centering
    \includegraphics[width=0.4\textwidth]{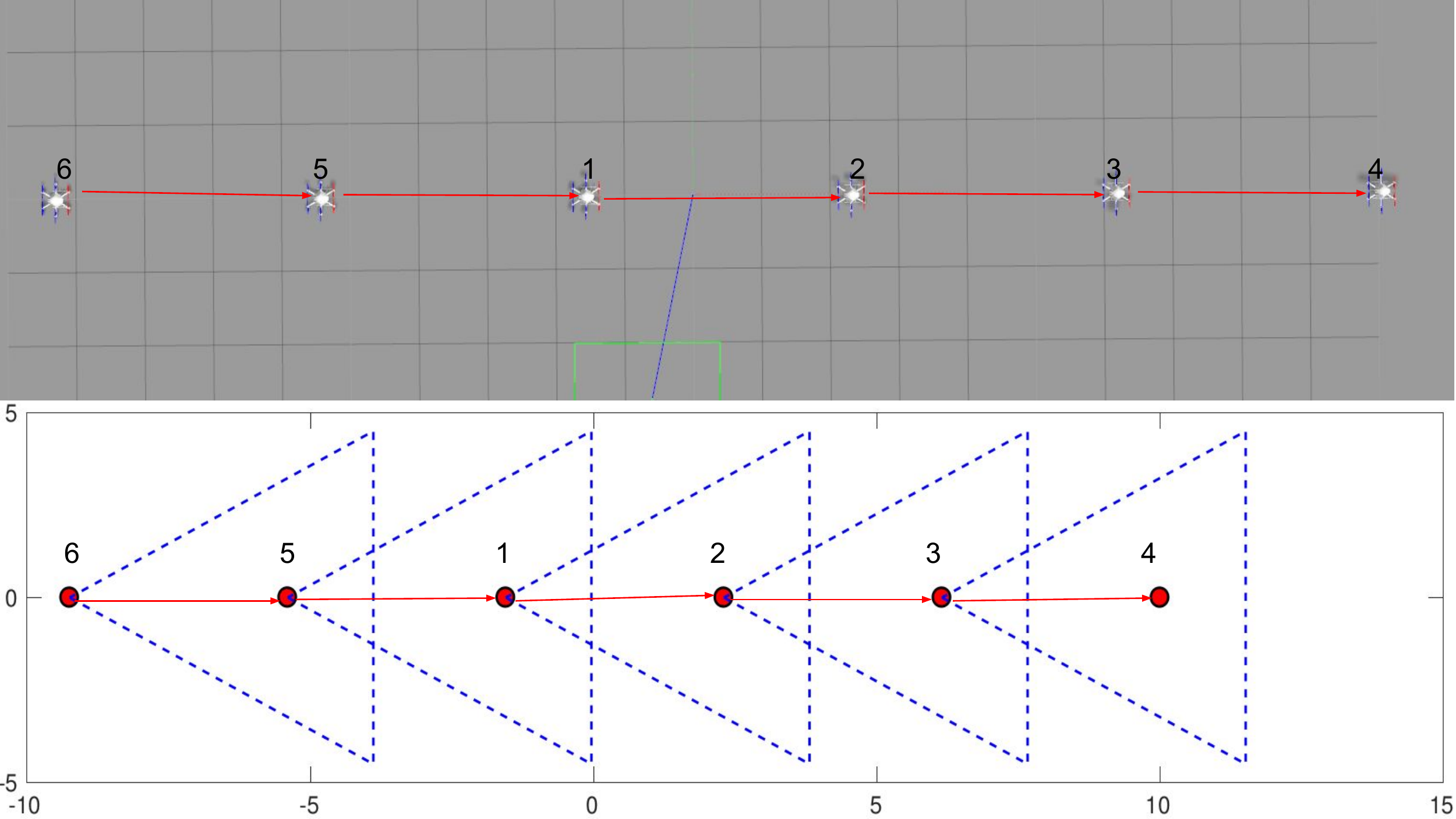}
    \caption{Robots in a spanning tree orientation (leader robot $4$) in ROS C++ Gazebo environment with FOV of each robot represented in corresponding MATLAB plot. }
    \label{fig:gaz_pose}
\end{figure}
\def\figsize{0.89}
\def\figsizee{0.95}
\def\figsizeee{0.92}
\def\subfigsize{0.5}
\def\spacereduction{-0.15cm}

\begin{figure}[t!]
    \centering
    \includegraphics[width=0.45\textwidth]{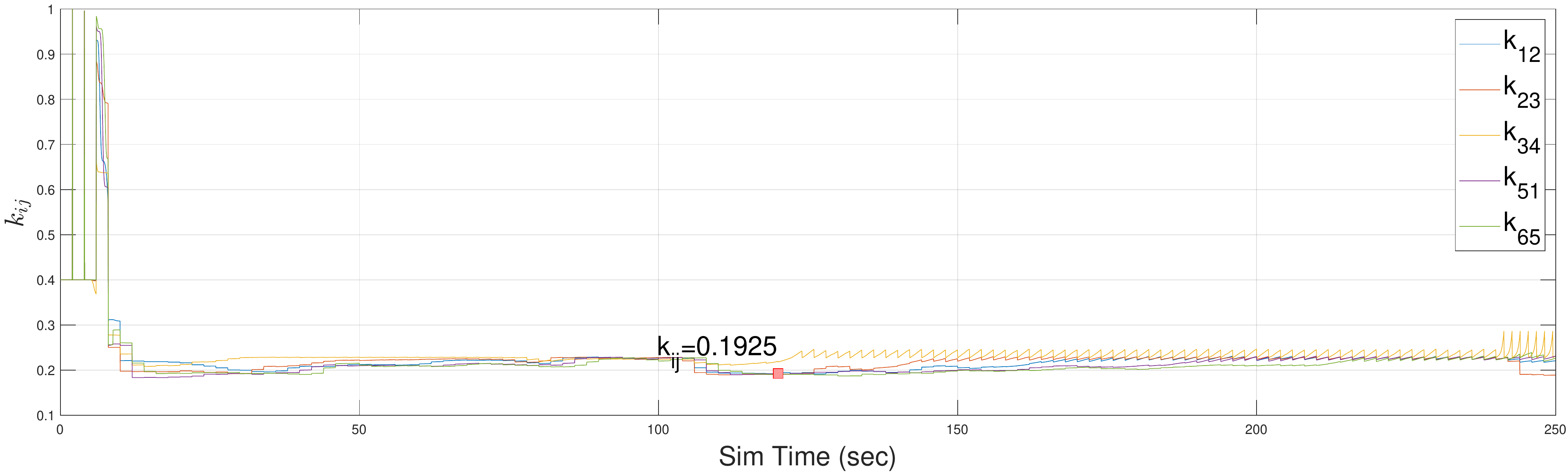}
    \caption{$k_{ij}$ computed using adaptive law in equation \eqref{eq:adp_law}. }\label{fig:kij}
\end{figure}

\begin{figure}[t!]
    \centering
    \includegraphics[width=0.45\textwidth]{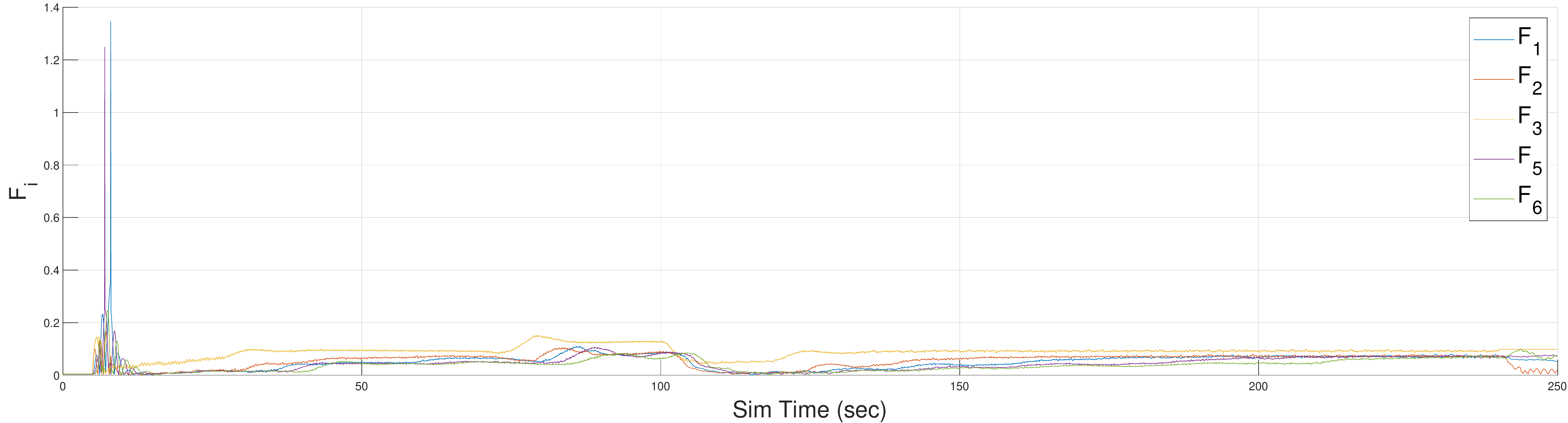}
    \caption{Plot of $F_i(p,k)$, the pairwise cost function encoding the deviation of pairwise interactions from its nominal model. }\label{fig:F}
\end{figure}

Figure~\ref{fig:kij} shows the adaptive gain $k_{ij}$ solved using adaptive laws \eqref{eq:adp_law}. Corresponding to the the computed $k_{ij}$, Figure~\ref{fig:F} shows the plot of the per robot cost $F_i$. Clearly, the $F_i$ values converge to values close to zero, the optimal solution representing no deviation from the nominal model. However, any change in the actuation (i.e., change in speed or heading while in motion) of the robots leads to the $F_i$ values to deviate slightly away from zero but still remain sufficiently close to the optimal value for a stable leader-follower scenario. 

\def\figsize{0.89}
\def\figsizee{0.95}
\def\figsizeee{0.92}
\def\subfigsize{0.5}
\def\spacereduction{-0.15cm}

\begin{figure}
\centering
\begin{subfigure}[b]{0.43\textwidth}
   \includegraphics[width=1\linewidth]{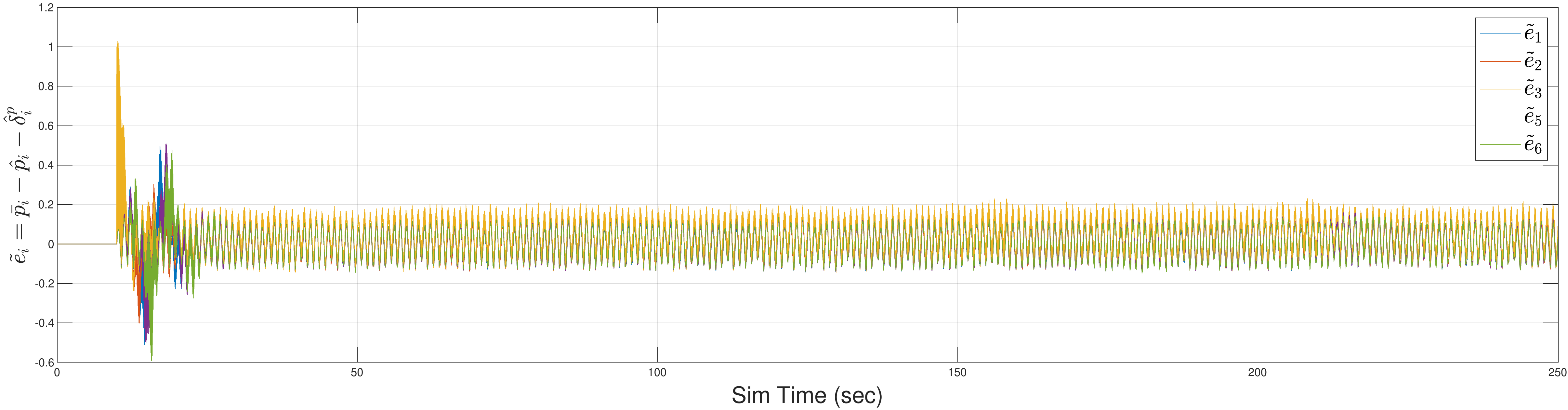}
   \caption{}
   \label{fig:er} 
\end{subfigure}

\begin{subfigure}[b]{0.43\textwidth}
   \includegraphics[width=1\linewidth]{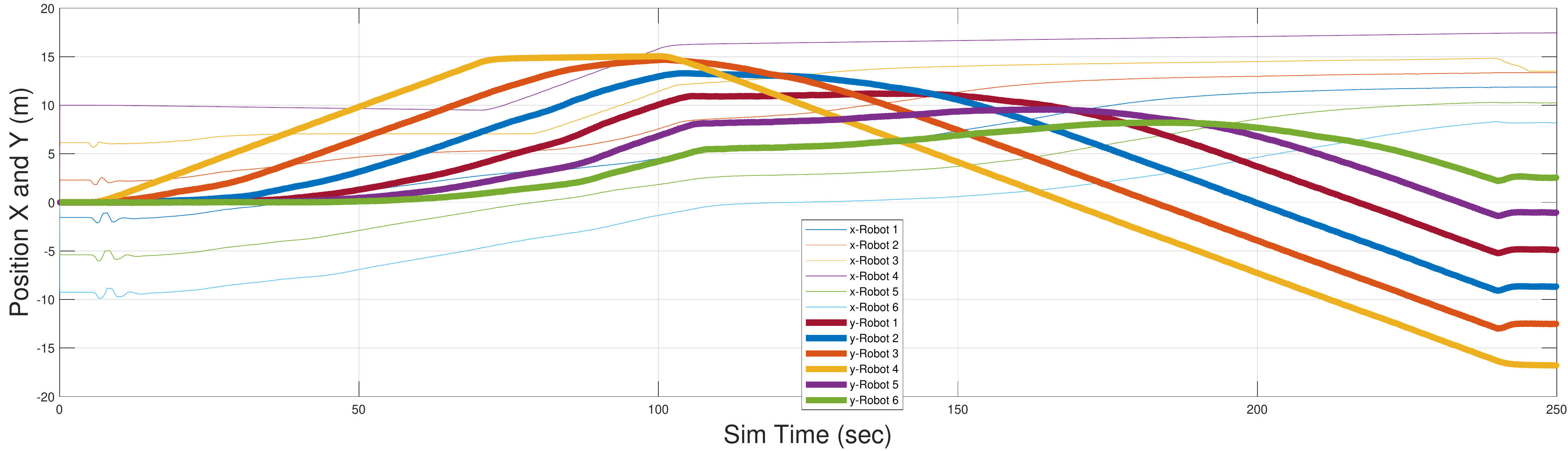}
   \caption{}
   \label{fig:robot_dist}
\end{subfigure}

\caption[Two numerical solutions]{(a) Plot of error dynamics $\Tilde{e}_i$ for each robot as given by \eqref{eq:er}. (b) Plot of measured positions $(x,y)$ of in MRS for complete simulation time of $250s$.}
\end{figure}
In our simulations, robot $3$ is induced with an unbounded sensor fault $0.2t$ and all robots (except leader) are induced with bounded actuation fault $1.5sin(2\pi t)$ where both comply with Assumption \ref{asum:2}. The yellow plot in Figure~\ref{fig:er} shows the error dynamics of robot $3$, where clearly the error induced due to sensor fault in Robot $3$ is attenuated. Similarly the oscillatory, bounded actuator fault, which too is attenuated, is visible in the oscillation of the error dynamics of all the robots. Now, Figure~\ref{fig:robot_dist} shows the measured positions, ($x$,$y$),  of the robots with no indication of the faults having any impact on the trajectories of the robots implying that the \emph{global} topology control objective of leader-following was attained despite of the faults, while also adaptive tuning interactions to match desired models of robot interaction. 

\section{Conclusion}\label{sec:conclusion}
We derived an adaptive control law capable of maintaining the interactions between robots in MRS such that pairwise ``prescribed'' performance is achieved which is then coupled with $H_{\infty}$ control protocols for resiliency towards additive sensor and actuator faults. 
Realistic ROS Gazebo simulations are provided to support the theory developed. In this regard, we have shown that that our methods can reduce the gap between
easy to analyze multi-robot control methods using the \emph{potential-based control framework} and real-world
deployments.

\bibliographystyle{IEEEtran}
\bibliography{arxiv}

\end{document}